\DeclareMathOperator\T{\mathsf{T}}
\DeclareMathOperator\E{\mathsf{E}}
\DeclareMathOperator\tr{\mathsf{tr}}
\DeclareMathOperator\Diag{\mathsf{Diag}}
\newtheorem{lemma}{Lemma}
\newtheorem{definition}{Definition}
\newtheorem{proposition}{Proposition}
\begin{document}

\title{Resilient and Consistent Multirobot Cooperative Localization with Covariance Intersection}

\author{Tsang-Kai~Chang,
        Kenny~Chen,
        and~Ankur~Mehta%
\thanks{The authors are with the Department
of Electrical and Computer Engineering, University of California, Los Angeles,
CA, 90095 USA (e-mail: tsangkaichang@ucla.edu; kennyjchen@ucla.edu; mehtank@ucla.edu).}}

\maketitle

\begin{abstract}
   Cooperative localization is fundamental to autonomous multirobot systems, but most algorithms couple inter-robot communication with observation, making these algorithms susceptible to failures in both communication and observation steps. To enhance the resilience of multirobot cooperative localization algorithms in a distributed system, we use covariance intersection to formalize a localization algorithm with an explicit communication update and ensure estimation consistency at the same time. We investigate the covariance boundedness criterion of our algorithm with respect to communication and observation graphs, demonstrating provable localization performance under even sparse communications topologies. We substantiate the resilience of our algorithm as well as the boundedness analysis through experiments on simulated and benchmark physical data against varying communications connectivity and failure metrics.  Especially when inter-robot communication is entirely blocked or partially unavailable, we demonstrate that our method is less affected and maintains desired performance compared to existing cooperative localization algorithms.
\end{abstract}

\begin{IEEEkeywords}
   Cooperative localization, Kalman filtering, covariance intersection, distributed estimation.
\end{IEEEkeywords}

\IEEEpeerreviewmaketitle

\section{Introduction}

   % importance of multirobot system/localization
   Localization is one of the most fundamental elements for autonomous mobile robots. As multiple robots form a team to improve robustness and scalability, localization of a multirobot system is therefore a premise to the successful deployment of such system to achieve high-level goals.

   Different from a single-robot scenario, there are two additional aspects that enable multiple robots to localize themselves cooperatively. First, a robot can observe other robots and the relative observation between them can enhance the overall localization performance. Second, robots can share their information with one another, which can also improve the overall localization performance. This scheme is called \emph{cooperative localization}. Different approaches are proposed for multirobot cooperative localization, ranging from the extended Kalman filter (EKF), the particle filter, to optimization-based approaches. Among all approaches of cooperative localization, we focus on the EKF-based approaches primarily due to its computational efficiency.
   
   % consistency: challenges of this topic
   While cooperative localization takes advantage of relative observation and inter-robot communication, designing a cooperative localization algorithm has its own difficulties. By considering localization as an estimation problem, estimation consistency remains a challenge for multirobot systems. Intuitively, as a correlation implies the dependency between two estimates, if these two estimates are fused with underestimated correlation, the resulting estimate no longer accounts for the estimation uncertainty and the \emph{over-confidence problem} occurs. An extreme example is to regard two repetitive datapoints as two independent information in data fusion, and therefore some researchers also refer this problem as the \emph{double counting problem}. 
   In EKF, the linearization step also results in the inconsistency problem, mainly due to a mismatch from the linear state space model and the Gaussian assumption \cite{julier_counter_2001, bailey_consistency_2006, huang_convergence_2007}. We stay with the assumption that linearization is reasonable and address the inconsistency problem associated with inter-robot correlations.

   % communication
   In order to keep each inter-robot correlation updated, communication is often extensively performed, but excessive communication poses resilience concern. 
   In particular, the resilience of the cooperative localization algorithm ensures the performance does not drastically fall off in face of the communication failures or even adversaries.
   In the seminal cooperative localization algorithm \cite{roumeliotis_distributed_2002}, an all-to-all communication is required after every observation in order to maintain correlations equivalent to a centralized EKF. Tracking correlations in a distributed system not only requires extensive communications, but it also makes the system vulnerable to even a single communication failure. The following works attempt to decrease the amount of communication either by introducing additional server in the distributed system \cite{kia_server_assisted_2018}, or by sacrificing the estimation consistency \cite{li_split_2013, luft_recursive_2018}. 
   For either improvement, communication is regarded as one of the steps in an observation update and takes place right after a relative observation. However, such association strongly relies on the assumption that communication is available whenever needed. In short, in most cooperative localization algorithms, communication between robots is either excessive or is implicitly assumed to be always available and free from failure, which makes these algorithms less resilient.

   %%%
   % our approach: explicit communication, distributed estimation algorithm
   To maintain estimation consistency and to ensure communication resilience, we separate the communication step from the observation step in the proposed algorithm by using the covariance intersection (CI) fusion technique \cite{julier_nondivergent_1997, chen_estimation_2002, sun_multisensor_2004, reinhardt_minimum_2015}. In this algorithm, the estimation consistency is directly assured by CI. Since the communication step is explicit in the proposed algorithm, communication is no longer a complementary part after the observation but rather an independent source contributing further information. Therefore, the proposed algorithm does not need excessive communications, and communication unavailability will not affect our algorithm's observation update which enhances the resilience.
   CI-based cooperative localization algorithms are commonly criticized by having an overly conservative estimate. This work fundamentally avoids this overly conservative estimation problem by limiting CI only to the communication update, and it is shown that the localization performance is comparable to the algorithm based on the centralized EKF.

   % analysis
   In addition to the proposal of the algorithm, we complete this paper with a performance analysis on the proposed algorithm. By interpreting the proposed algorithm as a distributed estimation algorithm, we investigate the covariance boundedness criterion to assure an upper bound on localization performance. To address the nature of multiagent system, the analysis takes the configuration of observation and communication into account with graph description.

% ==============================   

   % different from conf publication
   This paper is a revised and substantially extended version of our previous conference publication \cite{tsangkai_1}. The conference paper aims to formalize the algorithm, but the investigation of the resilience is only presented in this paper. Moreover, we extensively consider various scenarios in this paper to study the effect of communication failures on different algorithms. Technically, the conference paper assumes that the orientation estimate is given, and we drop that assumption to ensure the applicability of the proposed algorithm in this paper.

   % layout
   % We organize this paper as follows:

\section{Related Work}   
\label{sec:related_work}

   The concept of cooperative localization is first proposed in \cite{kurazume_cooperative_1994}, and the term ``cooperative localization'' is later coined in \cite{rekleitis_multiagent_1998}. The work \cite{kurazume_experimental_2000} extends the techniques in \cite{kurazume_cooperative_1994} to an experimental setting. The cooperative localization is also developed in a team of small robots \cite{grabowski_localization_2001} to globally localize the team. While these algorithms are able to use the relative observation and the inter-robot communication, they are limited to particular system settings.
    In the early stage of the cooperative localization development, these algorithms establish the basis for current cooperative localization algorithms that are more fundamental and general.

   Depending on the underlying framework, we classify current cooperative localization algorithms into three categories: EKF-based approaches, particle filter-based approaches, and optimization-based approaches. In this section, we highlight the advantages of each algorithm as well their limitations, with specific focus on estimation consistency and communication resilience.

\subsection{Extended Kalman Filter-based Approaches}   
%% Centralized EKF one 
% exact
   
   EKF-based approaches are the mainstream for cooperative localization algorithms. The benchmark of the EKF-based approach is established in the seminal paper \cite{roumeliotis_distributed_2002}, together with its theoretical analyses in \cite{mourikis_performance_2006, huang_observability_based_2011}. This cooperative localization algorithm emphasizes the importance of correlations between inter-robot estimates, and is fundamentally free from the over-confidence problem. In particular, this algorithm is exactly a distributed implementation of the centralized Kalman filter, and the correlations between inter-robot estimates are well tracked and updated. However, the communication cost for this distributed implementation is very high. In particular, an all-to-all communication is needed after every observation. As a result, the algorithm performance is very susceptible to the communication failure. The estimation consistency no longer holds with a single communication failure, which impairs the system's overall resilience.
   To lower the communication cost in \cite{roumeliotis_distributed_2002}, a server in charge of calculating and broadcasting the estimation information is introduced in \cite{kia_server_assisted_2018}. Therefore, all-to-all communication is no longer necessary to recover exact inter-robot correlations. However, the introduction of the server makes the whole system less distributed. Not only is the entire system more vulnerable to the server's failure, but also an initial setup of the server is required.

   Beyond the proposal of the algorithm itself, the theoretical analysis of this centralized-equivalent algorithm is reported in \cite{mourikis_performance_2006, huang_observability_based_2011}. In \cite{mourikis_performance_2006}, with an implicit assumption that all-to-all communication is available and successful at all times, the observation configuration criterion of the bounded covariance is thoroughly investigated. We conduct a similar boundedness analysis for our algorithm, especially on the observation and the communication configurations without the assumption of perfect communication. In \cite{huang_observability_based_2011}, the linearization consistency issue of EKF-based localization algorithms is presented, with focus on the linearization points while calculating the Jacobians. In this paper, we maintain the assumption that linearization error is small, and focus on the linear estimation in the performance analysis in Sec. \ref{sec:analysis}.

   % misc
   % can put anywhere
   % There are some other works of cooperative localization with exact inter-robot correlations on specific scenarios. A cooperative localization algorithm is proposed in \cite{bailey_decentralised_2011} with a server in charge of information fusion. For the underwater vehicles with acoustic modems, a cooperative localization algorithm is studied in \cite{webster_decentralized_2013}. These two algorithms rely on the presence of a server to recover the exact inter-robot correlations, and are thus not suitable solution for distributed robots. In addition, these two algorithms are presented and tested in systems with less than 3 robots, and it is unclear how to extend the algorithm to systems beyond that.     

%% both avoid fusion of correlated estimates

   The over-confidence problem can be avoided by only fusing uncorrelated estimates in a multirobot system. This idea is substantiated by keeping a bank of filters in each robot \cite{bahr_consistent_2009, zhu_multirobot_2019}. In \cite{bahr_consistent_2009}, each underwater vehicle maintains a bank of EKFs and only uncorrelated estimates in the bank are fused subsequently. A similar localization method is proposed for mobile robots while simultaneously tracking targets in \cite{zhu_multirobot_2019}. The main disadvantage of this category is that the number of the filters in a single bank grows exponentially with the number of the robots in a system, which imposes significant storage cost. Another way to realize this idea, called the state exchange scheme, is proposed in \cite{karam_localization_2006}. Specifically, there is no fusion but rather replacement within robot estimates to maintain the independence between robot estimates. Historical information is therefore discarded with the arrival of new information, which leads to extremely inefficient estimates.

%% approximate the correlation   
   % CI
   Instead of retrieving exact inter-robot correlations, some approaches approximate the correlations and thus largely decrease communication cost. Covariance intersection is often applied in these approaches, since it can fuse several estimates without knowing the correlations and maintain estimation consistency at the same time. To the best of our knowledge, the first application of CI for cooperative localization is the example in \cite{arambel_covariance_2001}. Our algorithm is similar to this one, but we generalize the algorithm and systematically study the boundedness criterion in this paper. CI is also applied differently in the cooperative localization in \cite{carrillo_arce_decentralized_2013}. In this algorithm, each robot only keeps its own state estimate, and the relative observation is fused by CI. As a consequence, the estimation is very conservative in this method. On the contrary, in our algorithm, robots keep an estimate of the entire system, and relative observations can directly update the estimate. Consequently, our algorithm is not overly conservative, as verified by the experiments in Sec. \ref{sec:experiment}. An extended work of \cite{carrillo_arce_decentralized_2013} is presented in \cite{klingner_fault_tolerant_2019} by incorporating the covariance union. However, as an even more conservative fusion scheme than CI, this algorithm with covariance union is too conservative for any practical use.

   % approximated but not consistent
   While algorithms based on CI ensure estimation consistency, some other algorithms with approximated correlations do not maintain such property. The split covariance intersection is applied in cooperative localization in \cite{li_split_2013}. The main drawback of this approach is that the independent part and the dependent part can not be clearly split. Therefore, the fusion in relative observation is problematic, as mentioned in \cite{carrillo_arce_decentralized_2013}. In \cite{luft_recursive_2018}, the exact covariance matrix is approximated by a block diagonal matrix, and the inter-robot correlations are thus suppressed. Since the estimation consistency is not maintained, the over confidence problem can occur when applying this algorithm. 
   A cooperative localization algorithm that targets at the scenario with measurements at different time instances is proposed in \cite{indelman_graph_based_2012}. However, the fundamental Kalman filtering assumption of the noise independence has to be contradicted to avoid recursive updates among robots, which also raises the same concerns of estimation consistency.

\subsection{Particle Filter-based Approaches}

   % particle filter   
   To alleviate the nonlinearity issues in multirobot localization, particle filters are often applied \cite{fox_probabilistic_2000, howard_putting_2003, prorok_low_cost_2012}. However, the correlations among robot estimation is not easy to handle in particle filter-based cooperative localization algorithms. One of the early attempts to applying particle filter in cooperative localization can be found in \cite{fox_probabilistic_2000}. However, the correlations between robots are ignored, and the result is overly confident.
   In \cite{howard_putting_2003}, a dependency tree is introduced to alleviate the double counting problem between two robots, but it only avoids the most obvious cases and still cannot prevent the over-confidence problem from happening. In \cite{prorok_low_cost_2012}, a particle clustering method is introduced to reduce the computational complexity of particle filter-based methods, but correlations between estimates are not explicitly addressed. In fact, the authors wrongly assume the independence between the estimates in different robots in reciprocal sampling.
   
   In summary, particle filter-based multirobot cooperative localization algorithms cannot track the correlations between distributed estimates easily. Moreover, particle filters are generally more computationally expensive compared to Kalman filter-based approaches.

\subsection{Optimization-based Approaches}
   
   Cooperative localization can also be solved with optimization-based methods, including maximum likelihood estimation \cite{howard_localization_2003} and maximum a posteriori estimation \cite{nerurkar_distributed_2009}. Optimization-based approaches first formulate cooperative localization as a nonlinear least squares problem in a centralized fashion, and then is directly solved offline. To counter the centralized modeling and offline solving for localization, excessive communication is necessary between distributed robots. As a result, in both algorithms \cite{howard_localization_2003, nerurkar_distributed_2009}, robots have to broadcast their information to the entire team regularly. In terms of the offline nature, the authors in \cite{nerurkar_distributed_2009} partially tackle this problem by marginalization, but an all-to-all communication is expected in this marginalization step.
   For optimization-based approaches, the burden of communication makes them less popular compared to those aforementioned algorithms.

\section{Mathematical Preliminaries}

   In this section, we provide the essential mathematical preliminaries to construct and analyze the proposed cooperative localization algorithm.

%%%
\subsection{Estimation Consistency and Covariance Intersection}   

   A consistent estimate can be seen as a conservative estimate regarding the estimation uncertainty intuitively. In other words, a conservative estimate reports larger uncertainty than the estimate really provides, so as to avoid over-confidence data fusion.
   As we use Gaussian random vectors as estimates, since the covariance matrix represents the uncertainty of the estimate, a consistent estimate can be considered as an estimate that has larger covariance matrix in the positive definite sense.
    The aforementioned over-confidence problem and the double counting problem can be avoided if estimation consistency is maintained. Formally, a consistent estimate is mean-preserving and has no smaller covariance matrix in the positive definite sense, with the following definition:
   \begin{definition}[Estimation consistency]
    An estimate $\hat{z}$ of a real vector $z$ is a Gaussian random vector with mean $\E[\hat{z}]$ and covariance $\Sigma_{\hat{z}}$. The estimation $\hat{z}'$ of $z$ is called \emph{consistent} of $\hat{z}$ if $\E[\hat{z}']=\E[\hat{z}]$ and $\Sigma_{\hat{z}'} \geq \Sigma_{\hat{z}}$.
   \end{definition}  
   
   %%% CI
   \begin{lemma}[Covariance intersection \cite{julier_nondivergent_1997, chen_estimation_2002, sun_multisensor_2004, reinhardt_minimum_2015}]
      Given $N$ consistent estimates $\hat{z}_i$ of $\hat{z}$ with covariances $\Sigma_{i}$ for $i=1,\dots,N$, the estimate $\hat{z}'$ is also consistent of $\hat{z}$ with
      \begin{equation}
         \Sigma_{\hat{z}'}^{-1} = \sum_{i=1}^N c_i \Sigma_{i}^{-1},
         \label{eq:ci_mean}
      \end{equation}
      and 
      \begin{equation}
         \E[\hat{z}'] = \Sigma_{\hat{z}'} \sum_{i=1}^N c_i \Sigma_{i}^{-1} \E[\hat{z}_i],
         \label{eq:ci_cov}
      \end{equation}
      where the nonnegative coefficients $c_i$ satisfy $\sum_{i=1}^N c_i = 1$.
      \label{lemma:ci}
   \end{lemma}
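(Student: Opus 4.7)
The plan is to unpack the definition of consistency into its two obligations —- mean matching $\E[\hat{z}']=\E[\hat{z}]$ and positive-semidefinite domination $\Sigma_{\hat{z}'}\geq\Sigma_{\hat{z}}$ —- and discharge each separately. Both follow directly from the defining formulas \eqref{eq:ci_mean}, \eqref{eq:ci_cov} together with the hypothesis that every $\hat{z}_i$ is itself consistent of $\hat{z}$, so no linear-Gaussian structure beyond what is already assumed needs to be invoked.

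For the mean condition, I would simply substitute the identity $\E[\hat{z}_i]=\E[\hat{z}]$, which holds for every $i$ by consistency, into the right-hand side of \eqref{eq:ci_cov}, factor the common mean out of the sum, and recognise the remaining factor as $\Sigma_{\hat{z}'}^{-1}$ by \eqref{eq:ci_mean}. The telescoping $\Sigma_{\hat{z}'}\Sigma_{\hat{z}'}^{-1}=I$ then yields $\E[\hat{z}']=\E[\hat{z}]$ in one line.

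The covariance condition is the more interesting piece. The main tool is the order-reversing property of matrix inversion on the cone of positive-definite matrices: $A\geq B>0$ implies $B^{-1}\geq A^{-1}$. By consistency of each $\hat{z}_i$ we have $\Sigma_i\geq\Sigma_{\hat{z}}$, so $\Sigma_i^{-1}\leq\Sigma_{\hat{z}}^{-1}$. Taking a convex combination with the coefficients $c_i$ and using $\sum_i c_i=1$ gives
\begin{equation*}
   \Sigma_{\hat{z}'}^{-1} = \sum_{i=1}^N c_i \Sigma_i^{-1} \;\leq\; \Bigl(\sum_{i=1}^N c_i\Bigr)\Sigma_{\hat{z}}^{-1} = \Sigma_{\hat{z}}^{-1},
\end{equation*}
and inverting once more produces $\Sigma_{\hat{z}'}\geq\Sigma_{\hat{z}}$, completing consistency.

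Honestly there is no real obstacle: the argument is a direct computation once the inversion-monotonicity fact is in hand. The only place where one needs to be careful is checking that all of $\Sigma_{\hat{z}},\Sigma_i,\Sigma_{\hat{z}'}$ are strictly positive definite so the inversions are valid and the order-reversal applies; this is implicit in treating each $\hat{z}_i$ as a bona fide Gaussian estimate with nondegenerate covariance and in having at least one $c_i>0$. Notably, the proof never uses any specific choice of the weights $c_i$ beyond their being a probability vector, which is exactly what makes them free parameters to be optimised later (e.g.\ by minimising $\tr\Sigma_{\hat{z}'}$ or $\dett\Sigma_{\hat{z}'}$) without disturbing consistency.
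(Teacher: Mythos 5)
The paper never proves this lemma --- it is imported verbatim from the cited CI literature --- so there is no in-paper argument to compare against. Read literally under the paper's Definition~1, where ``$\hat{z}'$ is consistent of $\hat{z}$'' asks only that $\E[\hat{z}']=\E[\hat{z}]$ and that the \emph{reported} covariance satisfies $\Sigma_{\hat{z}'}\ge\Sigma_{\hat{z}}$, your two steps are correct and complete: the mean identity follows by substituting $\E[\hat{z}_i]=\E[\hat{z}]$ and cancelling $\Sigma_{\hat{z}'}\Sigma_{\hat{z}'}^{-1}$, and the covariance ordering follows from the order-reversal of inversion on positive definite matrices plus convexity of the weights. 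Your closing remark that nothing depends on the particular choice of $c_i$ is also right and is indeed why the weights can be optimized afterwards.

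The caveat is that this literal reading makes the lemma nearly tautological and skips the step that the cited references actually prove, which is the substantive content of CI: the matrix defined by \eqref{eq:ci_mean} dominates the \emph{actual} error covariance of the fused random vector $\hat{z}'=\Sigma_{\hat{z}'}\sum_i c_i\Sigma_i^{-1}\hat{z}_i$ for \emph{arbitrary unknown cross-correlations} among the inputs. Writing $e_i=\hat{z}_i-z$ and $e'=\Sigma_{\hat{z}'}\sum_i c_i\Sigma_i^{-1}e_i$, the key inequality is the matrix Jensen (Cauchy--Schwarz) bound
\[
   \E\Bigl[\Bigl(\sum_i c_i\Sigma_i^{-1}e_i\Bigr)\Bigl(\sum_j c_j\Sigma_j^{-1}e_j\Bigr)^{\T}\Bigr]
   \;\le\; \sum_i c_i\,\Sigma_i^{-1}\E[e_ie_i^{\T}]\Sigma_i^{-1}
   \;\le\; \sum_i c_i\Sigma_i^{-1},
\]
which yields $\E[e'e'^{\T}]\le\Sigma_{\hat{z}'}$ without ever inspecting the cross terms $\E[e_ie_j^{\T}]$. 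Your argument never touches these cross terms, so it compares two nominal covariances but cannot certify that $\Sigma_{\hat{z}'}$ honestly bounds the spread of the fused estimator --- and that certification is precisely what the communication update in Algorithm~3 relies on when inter-robot correlations are unknown. So: correct for the statement as formalized, but if the lemma is meant to carry its usual meaning, the Jensen step above is the missing core of the proof.
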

   
   CI is able to combine several consistent estimates which might be correlated, and the result stays consistent. 
   The nonnegative coefficients $\{c_i, i=1,\dots, N\}$ such that $\sum_{i=1}^N c_i = 1$ are called the \emph{convex coefficients}.
   
   As all the estimates are Gaussian random vectors, they can be represented in the information form, which leads to a compact formula of CI. By defining the information mean $\bar{e}_i = \Sigma_{i}^{-1} \E[\hat{z}_i]$ and the information matrix $I_i = \Sigma_{i}^{-1}$, equations (\ref{eq:ci_mean}) and (\ref{eq:ci_cov}) can be rewritten as
   \begin{equation}
      \Sigma_{\hat{z}'}^{-1} = \sum_{i=1}^N c_i I_{i},
   \end{equation}
   and
   \begin{equation}
      \Sigma_{\hat{z}'}^{-1} \E[\hat{z}'] = \sum_{i=1}^N c_i \bar{e}_i,
   \end{equation}
   and $(\Sigma_{\hat{z}'}^{-1} \E[\hat{z}'], \Sigma_{\hat{z}'}^{-1})$ is the information form of $(\E[\hat{z}'], \Sigma_{\hat{z}'})$. In short, CI is actually the convex combination of the information means and that of the information matrices. We will use the information form since the infinite uncertainty is easier to characterize in this form.

%%%
\subsection{Graph Theory}

   A directed graph $\mathcal{G}=(V, E_\mathcal{G})$ is applied to characterize the configuration of communication and observation of the multirobot system. In the graph $\mathcal{G}$, the vertex set $V$ contains all the agents, i.e. $V=\{1,\dots,N\}$, and an edge is an ordered pair $(j,i) \in E_\mathcal{G}$, $j \neq i$. We may refer an edge as a \emph{link} in this paper. A \emph{path} in $\mathcal{G}$ is given by a sequence of vertices $ (v_{i_1}, v_{i_2}, \dots, v_{i_{m+1}})$ such that $(v_{i_k}, v_{i_{k+1}}) \in E_\mathcal{G}$ for $k=1,\dots, m$. 
   The graph $\mathcal{G}$ is called \emph{strongly connected} if there is a path for every pair of vertices. The graph $\mathcal{G}$ is called \emph{weakly connected} if there is a path between every pair of vertices regardless of the edge direction.
   The \emph{neighborhood} of agent $i$ is defined as $N_{\mathcal{G}}(i) = \{j | (j,i) \in E_\mathcal{G}\}$. The \emph{inclusive neighborhood} of agent $i$ is defined by $N_{\mathcal{G}}^*(i)= N_{\mathcal{G}}(i) \cup \{i\}$. 
   A complete treatment of graph theory on multiagent systems can be found in \cite{mesbahi_graph_2010}.

\section{Multirobot Cooperative Localization Algorithm}

   % setup
   We consider a multi-robot system in the 2D scenario with $N$ robots. At time $t$, the spatial state of robot $i$ is given by $q_{i,t}=[\theta_{i,t}, p^{\T}_{i,t}]$, which includes the orientation $\theta_{i,t}$ and the Cartesian position $p_{i,t}=[x_{i,t},y_{i,t}]^{\T}$, for $i \in \{1,\dots,N\}$. We assume that spatial states across all robots are in a common reference frame, which can be initialized by the cooperative localization setting \cite{trawny_interrobot_2010, zhou_determining_2013}.
   The robots can observe several distinguishable landmarks whose positions are given. While all landmarks serve as a reference for absolute spatial state, without loss of generality, we consider a single landmark in the environment, denoted by $\lambda$.

   % estimation
   In the proposed cooperative localization algorithm, robot $i$ has to track its own spatial state and the positions of other robots. We can represent the state of the entire system estimated by robot $i$ as 
   \begin{equation}
      s_{i,t} = 
      \begin{bmatrix}
         p_{1,t}^{\T}, \dots, q_{i,t}^{\T}, \dots, p_{N,t}^{\T}
      \end{bmatrix}^{\T}.
      \label{eq:state_def}
   \end{equation}
   We consider that case where the orientations of other robots are not tracked by robot $i$. The state defined in (\ref{eq:state_def}) is similar to the one in the EKF SLAM \cite{dissanayake_solution_2001}, where those $p_{i,t}$ in (\ref{eq:state_def}) are not stationary but dynamic. The proposed algorithm remains valid when the orientations of other robots are tracked with various sensing modalities. In fact, the proposed algorithm will be easier if all robots track the same state. We instead demonstrate the necessary steps when the robots track different states in the model (\ref{eq:state_def}).
   
   Based on the Kalman filtering, robot $i$ keeps a Gaussian estimate of $s_{i,t}$, denoted by $\hat{s}_{i,t}$, with mean $\bar{s}_{i,t}$ and covariance $\Sigma_{i,t}$. Depending on the type of arriving information, the proposed cooperative localization algorithm contains three updates: 
   \begin{itemize}
      \item the \emph{time propagation update} at the arrival of the proprioceptive information, 
      \item the \emph{observation update} at the arrival of the exteroceptive information, and
      \item the \emph{communication update} at the inter-robot communication.
   \end{itemize}
   The proposed algorithm does not require communication after the inter-robot observation. Therefore, all these three sources of informations contribute independently and complementarily to achieve localization.

%%%
\subsection{Time Propagation Update}

   % model
   The time propagation update is performed when robot $i$ has the proprioceptive information of the system, which consists of its own odometry input and those of other robots. Robot $i$ has the odometry input $u_{i,t}$, and estimates its next spatial state by a generic motion model:
   \begin{equation}
      q_{i,t+1}
       = f (q_{i,t}, u_{i,t}+ w_{i,t}),
      \label{eq:motion_model}
   \end{equation}
   where $w_{i,t}$ is the input noise and it is modeled as a zero-mean Gaussian random vector with covariance matrix $Q_w$.
   
   % other robot
   The odometry inputs for other robots, $u_{j,t}$, $j \neq i$, however, are not available for robot $i$. Without the exact value, we regard $u_{j,t}$ as a random variable, and the variability of that random variable is large enough to incorporate all possible values and to ignore the noise effect. The goal is not to guess the odometry input of other robots, but to maintain large estimation uncertainty that the estimate can be corrected during the observation or the communication updates. To be specific, we model the input $u_{j,t}$ as a Gaussian random vector with covariance matrix $Q_u$ large enough to maintain the estimation consistency. That is, for robot $i$,
   \begin{equation}
      p_{j,t+1} = f_p (p_{j,t}, u_{j,t}), \quad j \neq i.
   \end{equation}
   As the input noise in each robot is independent, the time update for $\hat{s}_{i,t}$ can be easily obtained, as in Algorithm \ref{al:time_update}.

\begin{algorithm}[t]
  \caption{The time propagation update for robot $i$}
  \begin{algorithmic}
    \State \textbf{Input}: $\bar{s}_{i,t}$, $\Sigma_{i,t}$, $u_{i,t}$
    \State \textbf{Output}: $\bar{s}_{i,t+1}$, $\Sigma_{i,t+1}$
    \newline
    \State\quad $\bar{s}_{i,t+1} = \begin{bmatrix}
        f_p(\bar{p}_{1,t}, \E[u_{1,t}]) \\
        \vdots \\
        f(\bar{q}_{i,t}, u_{i,t})  \\
        \vdots \\
        f_p(\bar{p}_{N,t}, \E[u_{N,t}])
        \end{bmatrix}$.
     \State\quad $F_i = \frac{\partial f(q, u)}{\partial q}(\bar{q}_{i,t}, u_{i,t})$.
     \State\quad $F_j = \frac{\partial f_p(p, u)}{\partial p}(\bar{p}_{j,t}, \E[u_{j,t}])$, for $j \neq i$.
     \State\quad $G_i = \frac{\partial f(q, u)}{\partial u}(\bar{q}_{i,t}, u_{i,t})$.
     \State\quad $G_j = \frac{\partial f_p(p, u)}{\partial u}(\bar{p}_{j,t}, \E[u_{j,t}])$, for $j \neq i$.
     \State\quad $Q = \Diag(G_1 Q_u G_1^{\T}, \dots, G_i Q_w G_i^{\T}, \dots, G_N Q_u G_N^{\T})$.
     \State\quad $\Sigma_{t+1} = \Diag(F_1,\dots,F_N) \Sigma_{t} \Diag(F_1,\dots,F_N)^{\T} + Q$.
     \end{algorithmic}
     \label{al:time_update}
\end{algorithm}

\subsection{Observation Update}   
   
   When robot $i$ observes either the landmark or other robots, the observation update is performed with the exteroceptive information.
   Specifically, robot $i$ observes the landmark in the environment according to model
   \begin{equation}
      o_{i\lambda,t} = h_{i\lambda}(q_{i,t}) + v_{i\lambda,t},
   \end{equation}
   where the $v_{i\lambda,t}$ is the observation noise modeled as zero-mean Gaussian with covariance $R_{i\lambda,t}$.
   The relative observation model between two robots is similarly given as
   \begin{equation}
      o_{ij,t} = h_{ij}(q_{i,t}, p_{j,t}) + v_{ij,t}.
   \end{equation}

   % combine all observations
   In reality, robot $i$ can observe more than one object at the same time, and the observation results may therefore be correlated. Thus, we define the set $O_{i,t}$ as the set of objects that robot $i$ observes at time $t$, including both landmarks and robots.
   With $O_{i,t} = \{i_1, i_2, \dots, i_{n_i}\}$, we stack all the measurements at time $t$ into the vector $o_{i,t}$,
   \begin{equation}
      o_{i,t} = \begin{bmatrix}
         o_{i i_1, t} \\
         \vdots \\
         o_{i i_{n_i}, t}
      \end{bmatrix}
      = \left[o_{ij,t} \right]_{j \in O_{i,t}},
   \end{equation}
   together with the entire observation noise $v_{i,t} = \left[v_{ij,t} \right]_{j \in O_{i,t}}$.
   With the covariance of the noise $v_{i,t}$ denoted by $R_{i,t}$, we have the EKF observation updates:
   \begin{equation}
      \bar{s}_{i,t^+} = \bar{s}_{i,t} + \Sigma_{i,t} H_i^{\T} ( H_i \Sigma_{i,t} H_i^{\T} + R_{i,t} )^{-1}(o_{i,t} - H_i \bar{s}_{i,t}),
   \end{equation}
   and
   \begin{equation}
      \Sigma_{i,t^+}^{-1} = \Sigma_{i,t}^{-1} + H_i^{\T} R_{i,t}^{-1} H_i,
   \end{equation}
   where the observation matrix is the stacked matrix given by
   \begin{equation}
      H_i = \left[ \frac{\partial h_{ij}(s)}{\partial s}(\bar{s}_{i,t}) \right]_{j \in O_{i,t}}.
   \end{equation}
   % Since there is no time elapse in the observation update, we use the notation $t^+$ to denote the instance after the observation update at time $t$.

\begin{algorithm}[t]
   \caption{The observation update for robot $i$}
   \begin{algorithmic}
      \State \textbf{Input}: $\bar{s}_{i,t}$, $\Sigma_{i,t}$, $o_{i,t}$
      \State \textbf{Output}: $\bar{s}_{i,t^+}$, $\Sigma_{i,t^+}$
      \newline
      \State\quad $H_i = \left[ \frac{\partial h_{ij}(s)}{\partial s}(\bar{s}_{i,t}) \right]_{j \in O_{i,t}}$.
      \State\quad $\bar{s}_{i,t^+} = \bar{s}_{i,t} + \Sigma_{i,t} H_i^{\T} ( H_i \Sigma_{i,t} H_i^{\T} + R_{i,t} )^{-1}(o_{i,t} - H_i \bar{s}_{i,t})$.
      \State\quad $\Sigma_{i,t^+}^{-1} = \Sigma_{i,t}^{-1} + H_i^{\T} R_{i,t}^{-1} H_i$.
   \end{algorithmic}
\end{algorithm}

\subsection{Communication Update}

   When robot $j$ sends its estimation information, in particular $\bar{s}_{j,t}$ and $\Sigma_{j,t}$, to robot $i$, robot $i$ can use this information to update its own estimation. However, the correlation between $\hat{s}_{i,t}$ and $\hat{s}_{j,t}$ is hard to track in a distributed system. Without knowing the exact correlations, we use CI to fuse these estimates to maintain the estimation consistency. 
   
   The direct application of CI by (\ref{eq:ci_mean}) and (\ref{eq:ci_cov}) is problematic, because $\hat{s}_{i,t}$ and $\hat{s}_{j,t}$ do not estimate the same state. In particular, the orientation estimate $\theta_{i,t}$ is in $\hat{s}_{i,t}$ but not in $\hat{s}_{j,t}$. In order to ensure that $\hat{s}_{i,t}$ and $\hat{s}_{j,t}$ represent the same state, we first have to remove the estimate of $\theta_j$ from $\hat{s}_{j,t}$, and then add the dummy estimate of $\theta_i$. We denote the resulting estimate as $\hat{s}^j_{i,t}$ and then the CI can be applicable at robot $i$.
   
   To remove the estimate of $\theta_j$ from $\hat{s}_{j,t}$, we use a $2N \times (2N+1)$ matrix defined by
   \[
      [T_{j^-}]_{m,n} = 
      \begin{cases}
         1       & \quad \text{if } m=n, n \leq 2(j-1)\\
         1       & \quad \text{or } m=n-1, n \geq 2j\\
         0  & \quad \text{otherwise}
      \end{cases}.
  \]
  Therefore, $T_{j^-}\hat{s}_{j,t}$ will be the estimate of $[p_{1,t}^{\T}, \dots, p_{N,t}^{\T}]$ with mean $T_{j^-} \bar{s}_{j,t}$ and covariance matrix $T_{j^-} \Sigma_{j,t} T_{j^-}^{\T}$. Equivalently, the same estimate admits an information form with the information mean $(T_{j^-} \Sigma_{j,t} T_{j^-}^{\T})^{-1} T_{j^-} \bar{s}_{j,t}$ and the information matrix $(T_{j^-} \Sigma_{j,t} T_{j^-}^{\T})^{-1}$.

  Next, we insert $\theta_i$ to the estimate $T_{j^-}\hat{s}_{j,t}$ in the information form to obtain $\hat{s}^j_{i,t}$. Since there is no information of $\theta_i$ from robot $j$, this step just ensures that the corresponding terms in the vector are matched, and the variance of $\theta_i$ in $\hat{s}^j_{i,t}$ will be infinite. We use a $(2N+1) \times 2N$ matrix,
   \[
      [T_{i^+}]_{m,n} = 
      \begin{cases}
         1       & \quad \text{if } m=n, n \leq 2(i-1)\\
         1       & \quad \text{if } m=n+1, n \geq 2i\\
         0  & \quad \text{otherwise}
      \end{cases},
  \]
  to append $\theta_i$. Thus, the information mean of $\hat{s}^j_{i,t}$ will be $T_{i^+} (T_{j^-} \Sigma_{j,t} T_{j^-}^{\T})^{-1} T_{j^-} \bar{s}_{j,t}$, and the corresponding information matrix will be $T_{i^+} (T_{j^-} \Sigma_{j,t} T_{j^-}^{\T})^{-1} T_{i^+}^{\T}$. By this construction, the exact mean of $\theta_{i,t}$ in the estimate $\hat{s}^j_{i,t}$ is not important, since the corresponding variance is infinite, and will not affect the result of CI.

   We define the set $C_{i,t}$ to contain all robots whose information is received at robot $i$ at time $t$, and $C_{i,t}^* = C_{i,t} \cup \{i\}$. Together with the convex coefficient $\{c_j, j \in C_{i,t}^*\}$, we have the communication update described in Algorithm \ref{al:comm_update}.
   % Determining the coefficient $\{c_j, j \in C_{i,t}^*\}$ is often based on a minimization of $\dett(\Sigma_{i,t^+})$ or that of $\tr(\Sigma_{i,t^+})$, and can be found in related covariance intersection literature (citation). 

\begin{algorithm}[t]
   \caption{The communication update for robot $i$}
   \begin{algorithmic}
      \State \textbf{Input}: $\bar{s}^j_{t}$, $\Sigma_{j,t}$, $j \in C^*_{i,t}$
      \State \textbf{Output}: $\bar{s}^i_{t^+}$, $\Sigma_{i,t^+}$
      \newline
      \State To construct the information form:
      \State\quad $\bar{e}^{j}_{i,t} = T_{i^+} (T_{j^-} \Sigma_{j,t} T_{j^-}^{\T})^{-1} T_{j^-} \bar{s}_{j,t}$, $j \in C_{i,t}$
      \State\quad $I^{j}_{i,t} = T_{i^+} (T_{j^-} \Sigma_{j,t} T_{j^-}^{\T})^{-1}  T_{i^+}^{\T}$, $j \in C_{i,t}$
      \newline
      \State To fuse incoming estimates by CI:
      \State\quad $\bar{s}_{i,t^+} = \Sigma_{i,t^+} \left( \sum_{j \in C_{i,t}} c_j \bar{e}^{j}_{i,t} + c_i \Sigma_{i,t}^{-1} \bar{s}_{i,t} \right)$.
      \State\quad $\Sigma_{i,t^+}^{-1} = \sum_{j \in C_{i,t}} c_j I^{j}_{i,t} + c_i \Sigma_{i,t}^{-1}$.
   \end{algorithmic}
   \label{al:comm_update}
\end{algorithm}

\section{Boundedness Analysis of the Position Estimation Covariance}
\label{sec:analysis}

   % motivation
   For the localization algorithm, the boundedness of the covariance matrix ensures that the estimation uncertainty is limited, which is essential for the success of the high-level tasks. Whether the estimation covariance matrix of each robot is bounded or not depends on the communication and the observation configurations of the entire multirobot system. To thoroughly study the covariance boundedness, we focus on a particular system with the widely-used motion and observation models. We then derive the covariance upper bound of the estimation covariance, and apply the result from the distributed estimation algorithm to obtain the boundedness criterion.

   % assumptions
   Specifically, we consider a system with unicycle motion model and the bearing-and-range measurements to demonstrate the analysis. We furthermore impose two assumptions:
   \begin{enumerate}
      \item Each robot has its orientation estimate, and the upper bound of the orientation estimate variance $\sigma_{\theta}^2$ is small and given.
      \item The observation and communication configurations are invariant over time, including the CI coefficients.
   \end{enumerate}
   As introduced in \cite{mourikis_performance_2006}, the first assumption decouples the position estimation from the orientation estimation, which is the main source of the linearization inconsistency problem \cite{julier_counter_2001, bailey_consistency_2006, huang_convergence_2007}. As the EKF heavily relies on the linearization approximation, the requirement of small orientation error also ensures the applicability of ongoing analysis. 
   The second assumption is imposed to assure that the entire system configuration is stationary. As a result, the boundedness analysis of the cooperative localization algorithm can be achieved by that of the distributed estimation algorithm \cite{chang_control_theoretical_2018}.
   
   % notation
   With the assumption that the orientation estimate is provided, all robots now estimate the same state, or the positions of all robots, denoted by $\xi_t= [p_{1,t}^{\T}, \dots, p_{N,t}^{\T}]^{\T}$. The estimate of $\xi_t$ at robot $i$ is $\hat{\xi}_{i,t}$, with mean $\bar{\xi}_{i,t}$ and covariance $\Phi_{i,t}$. While all the robots are estimating the same state, the communication step just degenerates to the vanilla CI step.

%%%
\subsection{System Model}   
   
   % motion propagation
   Given the velocity input $u_{i,t}$, the unicycle model describes the state propagation as
   \begin{equation}
      p_{i,t+1} 
      = \begin{bmatrix}
         x_{i,t} + (u_{i,t} + w_{i,t}) \Delta t \cos\theta_{i,t} \\
         y_{i,t} + (u_{i,t} + w_{i,t}) \Delta t \sin\theta_{i,t}
      \end{bmatrix},
      \label{eq:p_motion_model}
   \end{equation}
   where $w_{i,t}$ denotes the input noise and $\Delta t$ is the time interval between two consecutive update points.

   % observation model
   In terms of the observation model, we first set up a generic relative observation model, whose observability can be explicitly characterized. We then use the relative observation model as an intermediate step to analyze the bearing-and-range measurements.
   When robot $i$ observes object $j$, which can be either another robot or a landmark, the relative measurement $o_{ij,t}$ is given by
   \begin{equation}
      o_{ij,t} = C^{\T} (\theta_{i,t}) (p_{j,t} - p_{i,t} ) + v_{ij,t},
      \label{eq:p_relative_observation_model}
   \end{equation}
   where $C(\theta) = \begin{bmatrix} \cos \theta & -\sin \theta \\ \sin \theta & \cos \theta \end{bmatrix}$ is the rotation matrix.
   The observation noise $v_{ij,t}$ is a zero-mean Gaussian random vector with covariance $R_{v,ij}$.
   If robot $i$ observes object $j$ by the bearing measurement $\phi_{ij}$ and the range measurement $r_{ij}$, we characterize this measurement as
   \begin{align}
      o'_{ij,t} &= 
      \begin{bmatrix}
         \phi_{ij,t} \\
         r_{ij,t}
      \end{bmatrix} + v'_{ij,t} \notag \\
      &=
      \begin{bmatrix}
         \tan^{-1}\left(\frac{y_{j,t} - y_{i,t}}{x_{j,t} - x_{i,t}}  \right)- \theta_{i,t} \\
         \sqrt{ (x_{j,t} - x_{i,t})^2 + (y_{j,t} - y_{i,t})^2  }
      \end{bmatrix} + v'_{ij,t}.
      \label{eq:p_rnb_observation_model}
   \end{align}
   With the bearing measurement $\phi_{ij,t}$ and the range measurement $r_{ij,t}$, the relative measurement can be obtained by
   \begin{equation}
      o_{ij,t} = r_{ij,t}
      \begin{bmatrix}
         \cos( \phi_{ij,t}) \\
         \sin( \phi_{ij,t})
      \end{bmatrix},
      \label{eq:rnb_2_relative_o}
   \end{equation}
   together with the noise by linearizing (\ref{eq:rnb_2_relative_o})
   \begin{equation}
      v_{ij,t} = \begin{bmatrix}
          -r_{ij,t} \sin(\phi_{ij,t}) & \cos(\phi_{ij,t}) \\
           r_{ij,t} \cos(\phi_{ij,t}) & \sin(\phi_{ij,t})
      \end{bmatrix}
      v'_{ij,t}.
      \label{eq:rnb_2_relative_v}
   \end{equation}

%%%   
\subsection{Cooperative Localization Algorithm}

   % motion propagation
   We now apply the proposed cooperative localization algorithm on this particular model.
   By linearizing (\ref{eq:p_motion_model}), the error propagation equation of robot $j$ becomes
   \begin{equation}
      \tilde{p}_{j,t+1} 
      = \tilde{p}_{j,t} + \Delta t \begin{bmatrix}
         \cos\theta_{j,t} & -u_{j,t}\sin\theta_{j,t} \\
         \sin\theta_{j,t} & u_{j,t}\cos\theta_{j,t} 
      \end{bmatrix}
      \begin{bmatrix}
         \tilde{u}_{j,t} \\
         \tilde{\theta}_{j,t}
      \end{bmatrix}.
   \end{equation}
   For $j=i$, the odometry input $u_{i,t}$ is known, $\tilde{u}_{i,t} = w_{i,t}$, and therefore the covariance increment is given by
   \begin{equation}
      G_i Q_w G_i^{\T} = (\Delta t)^2 C(\theta_{i,t})
         \begin{bmatrix}
            \Sigma_w & 0\\
            0 & u_{i,t}^2\sigma^2_{\theta}
         \end{bmatrix}
         C^{\T}(\theta_{i,t}).
   \end{equation}
   For $j\neq i$, we model the odometry input $u_{j,t}$ itself as a random variable with variance $\sigma_u^2$, since it is not available for robot $i$. The covariance increment upper bound can then be taken as 
   \begin{equation}
      G_j Q_u G_j^{\T} = (\Delta t)^2 \max(\sigma_u^2, u_{\max}^2\sigma^2_{\theta}) I_2.
   \end{equation}
   In summary, the covariance is update by
   \begin{equation}
      \Phi_{i,t+1} = \Phi_{i,t} + \Diag(G_1 Q_u G_1^{\T}, \dots, G_i Q_w G_i^{\T}, \dots, G_N Q_u G_N^{\T}).
      \label{eq:phi_time_update}
   \end{equation}

   % observation
   As for the observation update, based on (\ref{eq:p_relative_observation_model}), the observation error can be linearly approximated as
   \begin{equation}
      \tilde{o}_{ij,t} \approx C^{\T}(\hat{\theta}_{i,t}) \check{H}_{ij} \tilde{\xi}_{i,t} + C^{\T}(\hat{\theta}_{i,t}) J \check{H}_{ij} \hat{\xi}_{i,t} \tilde{\theta}_{i,t} + v_{ij,t},
   \end{equation}
   to distinguish the estimation error $\tilde{\xi}_{i,t}$, the orientation estimation error $\tilde{\theta}_{i,t}$, and the measurement noise $v_{ij,t}$, where $J = \begin{bmatrix} 0 & 1 \\ -1 & 0 \end{bmatrix}$.
   If the observed object $j$ is a landmark,
   \begin{equation}
      \check{H}_{ij} = \begin{bmatrix} \\
            0_{2\times 2} & \cdots & \underbrace{-I_2}_i & \cdots & 0_{2\times2}
         \end{bmatrix}_{2\times 2N};
   \end{equation}
   while the observed object is robot $j$, then 
   \begin{equation}
      \check{H}_{ij} = \begin{bmatrix} \\
            0_{2\times 2} & \cdots & \underbrace{-I_2}_i & \cdots & \underbrace{I_2}_j & \cdots & 0_{2\times2}
         \end{bmatrix}_{2\times 2N}.
   \end{equation}
   
   The covariance of the innovation $\tilde{o}_{ij,t}$ is then given by
   \begin{equation}
      \E[\tilde{o}_{ij,t} \tilde{o}_{ij,t}^{\T}] = C^{\T}(\hat{\theta}_{i,t}) \check{H}_{ij} \Phi_{i,t} \check{H}_{ij}^{\T} C(\hat{\theta}_{i,t}) + R_{\theta_i,j,t} + R_{v,ij},
   \end{equation}
   where $R_{\theta_i,j,t} = \check{H}_{\theta_i, j, t} \sigma_{\theta_i,t}^2 \check{H}_{\theta_i, j, t}^{\T}$ and $\check{H}_{\theta_i, j, t} = C^{\T}(\hat{\theta}_{i,t}) J \check{H}_{ij} \hat{\xi}_{i,t} $.

   % multiple observations
   For multiple observation case, we stack the observation errors and obtain
   \begin{align*}
      \tilde{o}_{i,t} &\approx [C^{\T}(\hat{\theta}_{i,t}) \check{H}_{ij}]_{j \in O_{i}} \tilde{\xi}_{i,t} + [C^{\T}(\hat{\theta}_{i,t}) J \check{H}_{ij}]_{j \in O_{i}} \hat{\xi}_{i,t} \tilde{\theta}_{i,t} + v_{i,t} \notag\\
         &= \Xi_{i,t}^{\T} \check{H}_i \tilde{\xi}_{i,t} + \left( I_{|O_{i}|} \otimes C^{\T}(\hat{\theta}_{i,t}) J \right) \check{H}_i \hat{\xi}_{i,t} \tilde{\theta}_{i,t} + v_{i,t},
   \end{align*}
   where $v_{i,t} = [v_{ij,t}]_{j \in O_{i,t}}$,
   \begin{equation}
      \check{H}_i = \left[ \check{H}_{ij} \right]_{j \in O_i},
   \end{equation}
   $\Xi_{i,t} = I_{|O_{i}|} \otimes C(\hat{\theta}_{i,t})$, and $\otimes$ stands for the Kronecker product.
   The overall observation covariance can be expressed as
   \begin{equation}
      \E[\tilde{o}_{i,t} \tilde{o}_{i,t}^{\T}] = \Xi_{i,t}^{\T} \check{H}_i \Phi_{i,t} \check{H}_i^{\T} \Xi_{i,t}  + R_{\theta_i,t} + R_{v,i},
   \end{equation}
   where the first term comes from the position estimation error.
   The covariance is then updated in the observation update by
   \begin{equation}
      \Phi_{i,t^+}^{-1} = \Phi_{i,t}^{-1} + \check{H}_i^{\T} \Xi_{i,t} \left( R_{\theta_i,t} + R_{v,i} \right)^{-1} \Xi_{i,t}^{\T} \check{H}_i.
      \label{eq:phi_observation_update}
   \end{equation}   
   While the above derivations follow the relative observation model, the corresponding error approximation for bearing-and-range observation model can be obtained with (\ref{eq:rnb_2_relative_o}) and (\ref{eq:rnb_2_relative_v}).

   % communication update

%%%
\subsection{Covariance Upper Bound}

   As the matrix propagation of $\Phi_{i,t}$ involves time-dependent coefficients, we set up an upper bound matrix $\Psi_{i,t}$ of $\Phi_{i,t}$ with invariant coefficients. In the time propagation update, we can choose
   \begin{equation}
      \check{Q} = (\Delta t)^2 \max(\sigma_u^2, u_{\max}^2 \sigma^2_{\theta}) I_{2N}
   \end{equation}
   and update $\Psi_t$ by
   \begin{equation}
      \Psi_{i,t+1} = \Psi_{i,t} + \check{Q}
      \label{eq:psi_time_update}
   \end{equation}
   when $\Phi_{i,t}$ is updated by (\ref{eq:phi_time_update}).
   Similarly, for the observation update, while $\Phi_t$ is updated by (\ref{eq:phi_observation_update}), we can find a positive definite matrix $\check{R}_i$ such that $\check{R}_i^{-1} \leq \Xi_{i,t} \left( R_{\theta_i,t} + R_{v,i} \right)^{-1} \Xi_{i,t}^{\T}$, and update$\Psi_t$ according to
   \begin{equation}
      \Psi_{i,t^+}^{-1} = \Psi_{i,t}^{-1} + \check{H}_i^{\T} \check{R}_i^{-1} \check{H}_i.
      \label{eq:psi_obsv_update}
   \end{equation}
   For the communication step, $\Phi_{i,t}$ is updated by the conventional CI formula, and so it $\Psi_{i,t}$.

   By this construction, with the same initial condition, or $\Phi_{i,0} = \Psi_{i,0}$, we have $\Phi_{i,t} \leq \Psi_{i,t}$ for all $t$. In other words, $\Psi_{i,t}$ is an upper bound of $\Phi_{i,t}$. We then show the boundedness criterion of $\Psi_{i,t}$, which leads to the boundedness of $\Phi_{i,t}$.

%%%
\subsection{Covariance Boundedness Analysis}

   We now apply the result of the distributed Kalman filter with CI in \cite{chang_control_theoretical_2018} to analyze the covariance boundedness of $\Psi_{i,t}$. To explicitly characterize the relations among all robots, we use graphs to describe the observation and the communication configurations in the multirobot system. We define the observation graph and the communication graph separately to distinguish the observation and the communications relations. 
   We define the observation graph of robot $i$ as a graph $\mathcal{G}_{O_i}=\{\Omega^*, E_{O_i} \}$. The nodes of the graph $\Omega^*=\{1,\dots,n,\lambda\}$, which includes all the robots as well as the landmark. The pair $(i,j) \in E_{O_i}$ if $j \in O_i$. In other words, the links in the observation graph $\mathcal{G}_{O_i}$ stand for the observation from robot $i$ to entity $j$.\footnote{The definition of the observation graph is different from the observation topology defined in \cite{chang_control_theoretical_2018}, since the raw observations are not exchanged in this cooperative localization algorithm.} We also define the communication graph as a graph $\mathcal{G}_{C}=\{\Omega^*, E_{C} \}$, where $(i,j) \in E_{C}$ if $j \in C_i$. We then can use the following notation to collect all the robots that contribute the information to robot $i$ by the communication links.
   
\begin{definition}[Super Neighborhood \cite{chang_control_theoretical_2018}]
   For $j \neq i$, $j \in S_i$ if there exists a path in $\mathcal{G}_{C}$ from $j$ to $i$.
\end{definition}
   We define that $S^*_i = S_i \cup \{i\}$.

\begin{proposition}[The Boundedness Criterion]
   If the graph $\mathcal{G}_i = (\Omega^*, \cup_{j \in S^*_i} E_{O_j})$ is weakly connected,
   then $\Phi_{i,t}$ is bounded.
   \label{proposition}
\end{proposition}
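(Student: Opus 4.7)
The plan is to reduce the problem to the boundedness of the auxiliary upper-bound matrix $\Psi_{i,t}$ and then invoke the distributed-Kalman-filter-with-CI boundedness result from \cite{chang_control_theoretical_2018}. Since $\Psi_{i,t}$ was constructed so that $\Phi_{i,t} \leq \Psi_{i,t}$ for all $t$ (by choosing an invariant-coefficient process-noise bound $\check{Q}$ in \eqref{eq:psi_time_update} and an invariant measurement-noise bound $\check{R}_i$ in \eqref{eq:psi_obsv_update}, and by noting that the communication step shares the same CI formula for both $\Phi$ and $\Psi$), it suffices to prove that $\Psi_{i,t}$ admits a uniform upper bound in the positive-definite order.

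My first step is to verify that the triple $(\check{Q}, \{\check{H}_j, \check{R}_j\}_{j}, \mathcal{G}_C)$ defines exactly an instance of the distributed Kalman filter with CI analyzed in \cite{chang_control_theoretical_2018}, with the caveat (pointed out in the footnote attached to the definition of the observation graph) that in our setting each robot $j$ applies its own $\check{H}_j$ locally rather than broadcasting raw measurements. Because of this, the effective information injected into robot $i$ per CI round is, up to the convex weights and a bounded communication delay reflecting path length in $\mathcal{G}_C$, the sum $\sum_{j \in S^*_i} c_{ij}\, \check{H}_j^{\T} \check{R}_j^{-1} \check{H}_j$. So the second step is to show that $\Psi_{i,t}$ is majorized by the state of the distributed Kalman filter whose aggregate observation matrix at node $i$ has row support exactly $\bigcup_{j \in S^*_i} E_{O_j}$; this is essentially a matter of unrolling the CI recursion over a window long enough that information from every $j \in S^*_i$ has reached $i$ at least once, and then comparing information matrices.

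The third step is to translate the weak connectivity hypothesis on $\mathcal{G}_i = (\Omega^*,\ \cup_{j \in S^*_i} E_{O_j})$ into the observability condition required by the cited theorem. The state $\xi_t$ consists of the stacked positions $p_{1,t},\dots,p_{N,t}$, and each edge $(a,b)$ contributes either a direct position measurement (when $b = \lambda$) of the form $-I_2 p_{a,t}$, or a relative measurement $p_{b,t} - p_{a,t}$. A standard graph-Laplacian argument then shows that the nullspace of the aggregate $\check{H}_i^{\text{agg}}$ corresponds to position assignments that are constant on each weakly connected component not containing $\lambda$; weak connectivity of $\mathcal{G}_i$ together with the fact that $\lambda \in \Omega^*$ is touched by at least one edge (otherwise weak connectivity to the rest of the graph fails) then forces the nullspace to be trivial, giving full column rank of the stacked effective observation matrix over a bounded window.

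Combined with the fact that the motion model has a bounded process noise upper bound $\check{Q}$ and that $\check{R}_i$ is positive definite, the observability established above lets us invoke Theorem (boundedness) of \cite{chang_control_theoretical_2018} to conclude $\Psi_{i,t} \leq M I$ for some constant $M$, hence $\Phi_{i,t} \leq M I$ as well. The main obstacle I anticipate is the third step: carefully identifying the nullspace of the aggregate relative-observation operator in terms of weakly-connected components including the landmark node, and justifying that this rank condition is the exact form of observability required by the cited distributed-CI boundedness theorem, given that each local node applies only its own observation rows rather than the full aggregated observation.
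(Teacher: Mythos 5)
Your proposal follows essentially the same route as the paper's proof: bound $\Phi_{i,t}$ by the invariant-coefficient upper bound $\Psi_{i,t}$, identify the $\Psi$-recursion with the distributed Kalman filter with CI of \cite{chang_control_theoretical_2018}, and convert weak connectivity of $\mathcal{G}_i$ into observability of the stacked observation matrix before invoking that paper's boundedness theorem. Your graph-Laplacian nullspace argument is the same rank computation the paper performs via the incidence matrix $D(\mathcal{G}_i)'$ with the landmark row removed, and the information-propagation concern in your second step is already internalized in the cited theorem's super-neighborhood hypothesis, so no separate unrolling of the CI recursion over a communication window is required.
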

\begin{proof}
   % 1st
   First of all, the equations (\ref{eq:psi_time_update}) and (\ref{eq:psi_obsv_update}) are exactly those equations (8) and (12) in \cite{chang_control_theoretical_2018}.
   Therefore, we can consider the upper bound of the cooperative localization $\Psi_t$ as a realization of the distributed estimation algorithm described in \cite{chang_control_theoretical_2018}, whose boundedness criterion has been established.
   
   % 2nd 
   We then show that $(F, [\check{H}_j]_{j \in S^*_i})$ is observable if the graph $\mathcal{G}_i$ is weakly connected. If the landmark $\lambda$ is not connected in $\mathcal{G}_i$, or no landmark is observed by any robot in $S^*_i$, then
   \[
      [\check{H}_j]_{j \in S^*_i} = D(\mathcal{G}_i) \otimes I_2
   \]
   up to row reordering, where $D(\mathcal{G}_i)$ is the incidence matrix of $\mathcal{G}_i$ \cite[p.~202]{Meyer_matrix_2000}.
   If the landmark $\lambda$ is connected in $\mathcal{G}_i$, 
   \begin{equation}
      [\check{H}_j]_{j \in S^*_i} = D(\mathcal{G}_i)' \otimes I_2
      \label{eq:incidence_matrix_equation}
   \end{equation}
   up to row reordering, where $D(\mathcal{G}_i)'$ is defined by removing the landmark row in $D(\mathcal{G}_i)$. Therefore, (\ref{eq:incidence_matrix_equation}) holds for all cases. If $\mathcal{G}_i$ is weakly connected, $D(\mathcal{G}_i)'$ is full rank, and $D(\mathcal{G}_i)' \otimes I_2$ is also full rank, which implies the observability of $(F, [\check{H}_j]_{j \in S^*_i})$.

   % 3rd
   Since $(F, [\check{H}_j]_{j \in S^*_i})$ is observable and $(F, Q^{1/2})$ is controllable, $\Psi_{i,t}$ is bounded by Theorem 1 in \cite{chang_control_theoretical_2018}, which implies that $\Phi_{i,t}$ is also bounded.
\end{proof}

% explanation of this theorem
   Proposition \ref{proposition} is given in a different form in our prior work \cite{tsangkai_1}, but a clearer treatment with graph theory is provided here. Proposition \ref{proposition} states that as long as all the information collected by robot $i$ covers the entire robot team, the information is sufficient enough to localize the entire robot team, which leads to bounded $\Psi_{i,t}$ and bounded $\Phi_{i,t}$ as well. Proposition \ref{proposition} also signifies that the information can come either from observation or from communication, and both sources contribute to the localization performance.

\section{Results}
\label{sec:experiment}

   In this section, we present the performance and the resilience of our algorithm as compared to other four state-of-the-art multirobot cooperative localization methods. Based on the state tracked in a single robot and the underlying method, for simplicity, we rename all $5$ algorithms as
   \begin{itemize}
      \item the local-state centralized equivalent (LS-Cen) \cite{roumeliotis_distributed_2002},
      \item the local-state covariance intersection (LS-CI) \cite{carrillo_arce_decentralized_2013},
      \item the local-state split covariance intersection (LS-SCI) \cite{li_split_2013},
      \item the local-state block diagonal approximation (LS-BDA) \cite{luft_recursive_2018},
      \item our global-state covariance intersection (GS-CI).
   \end{itemize}
   As the LS-Cen algorithm uses the entire available information without any approximation, the result of LS-Cen can be regarded as the optimal performance. We first simulate all methods with generated data, which not only shows that our algorithm requires far sparser communication topology to achieve comparable performance of other methods, but also visualizes the boundedness analysis in Sec. \ref{sec:analysis}. Next, we analyze all methods in a common multirobot dataset, and show that our algorithm is more resilient during unfavorable and adverse communication loss than other algorithms.

%%%
\subsection{Simulation}

\begin{figure}[t]
	\center{\includegraphics[width=0.4\textwidth]{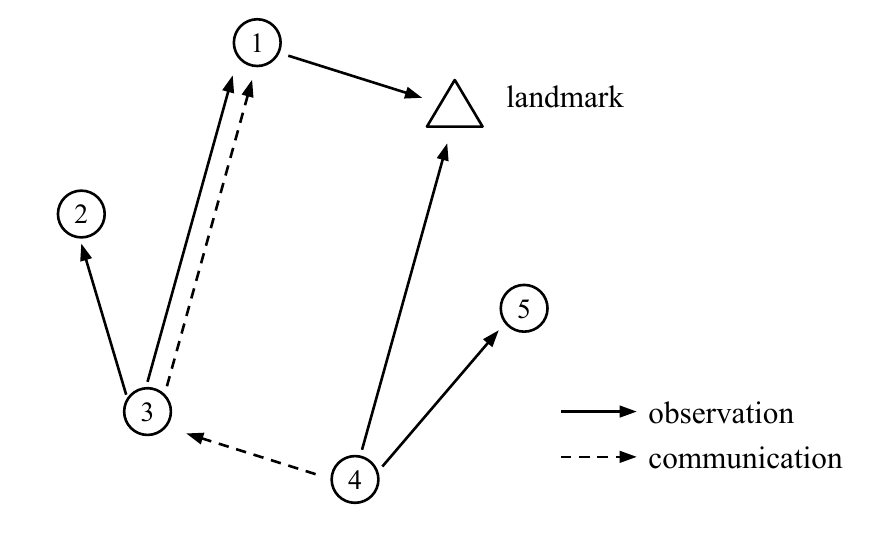}}
	\caption{The system topology with $N=5$ robots in the simulation. The communication graph is specified for the GS algorithm. For the LS algorithms, a fully connected communication graph is inherently required.}
	\label{sim:topology}
\end{figure}

\begin{figure}[t]
    \centering
    \includegraphics[width=0.45\textwidth]{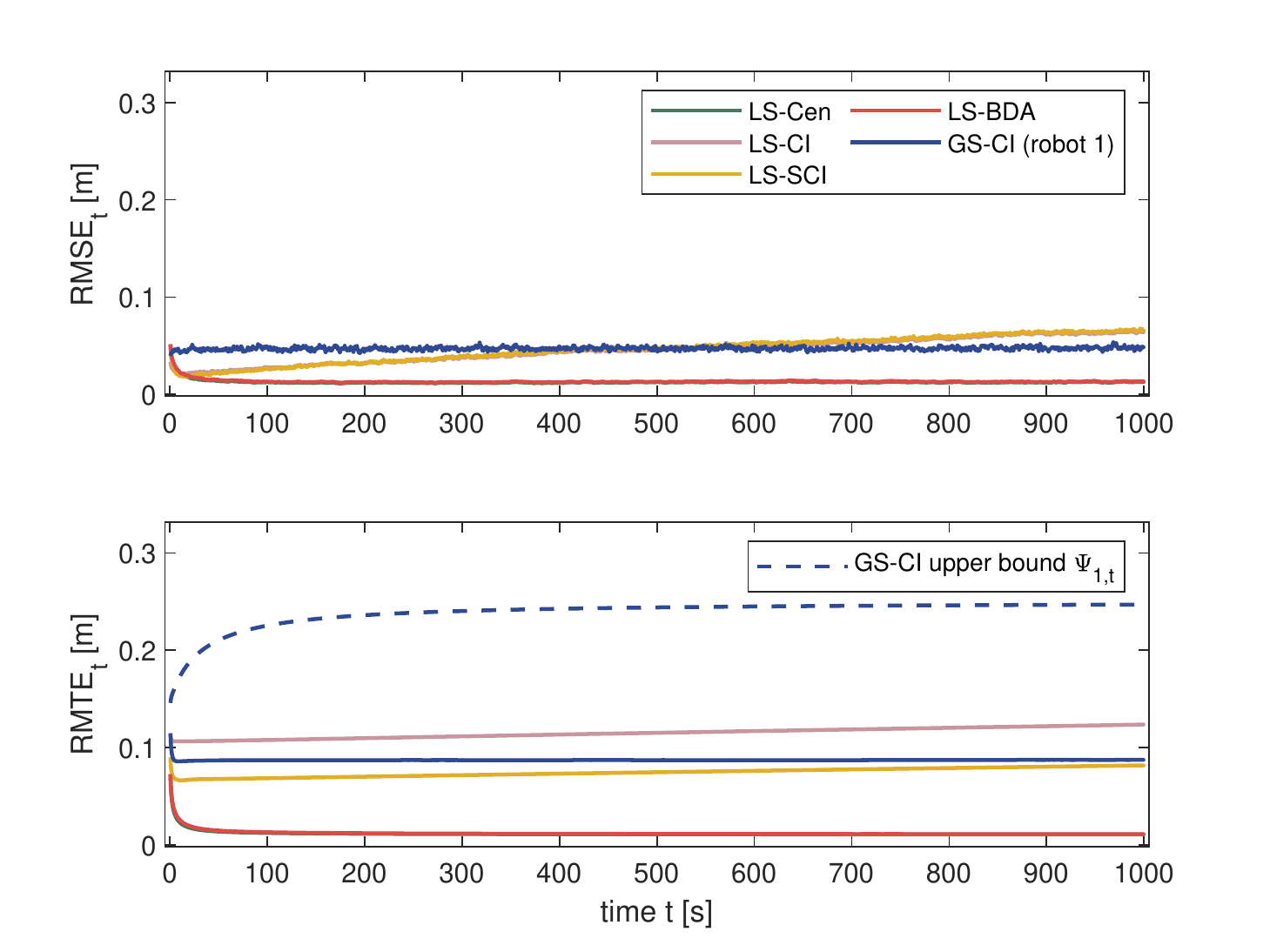}
	\caption{The cooperative localization performance with generated data. As for the communication graph, local-state (LS) algorithms assume all-to-all and perfect communication, and the global-state (GS) algorithm follows the graph in Fig. \ref{sim:topology}. The RMSE plot of LS-Cen and that of LS-BDA are overlapped. For the proposed GS-CI, robot $1$ has bounded covariance matrix, as suggested by Proposition 1.}
	\label{sim:boundedness}
\end{figure}

\begin{figure}[t]
    \centering
    \includegraphics[width=0.45\textwidth]{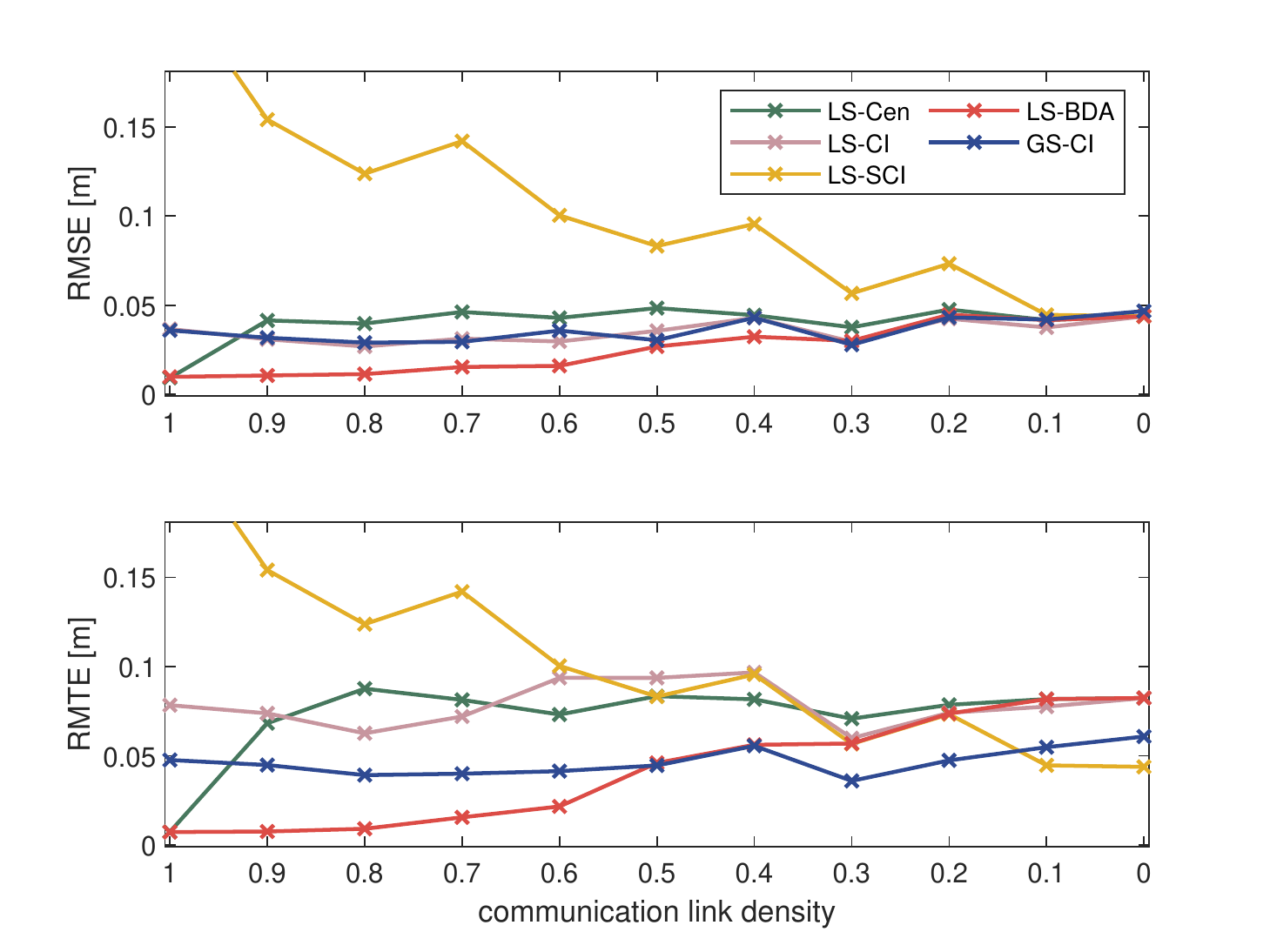}
	\caption{The averaged RMSE and RMTE at $t=1000$ sec over $100$ communication graphs. We fix the observation link density to be $0.75$. The lower the communication link density is, the sparser the communication graph becomes. The proposed GS-CI can provide good localization performance while ensuring the estimation consistency, even when the underlying communication graph is sparse.}
	\label{sim:topo}
\end{figure}

   To begin with, we investigate the performances of all $5$ algorithms with generated data.\footnote{The code of this subsection is available at https://github.com/tsangkai/multirobot\_localization.} In this simulation, we consider that the orientation estimate is given for $N=5$ robots, as assumed in Sec.~\ref{sec:analysis}. For each robot, the velocity input $u_{i,t}$ is taken uniformly between $[-0.09, 0.09]$ m/s, in which the velocity input variance in GS-CI can then be calculated. Fig. \ref{sim:topology} specifies the observation graph for the multirobot system. In terms of the communication graph, for local state (LS) algorithms, a fully connected communication graph is inherently required and therefore communication after each relative observation step is assumed to be perfect. For the global state (GS) algorithm, the communication is constrained as in Fig. \ref{sim:topology}, which is far sparser than those communication graphs for LS algorithms.

   To quantify the estimation performance against the ground truth, we define the root-mean-squared-error (RMSE) of the entire $N$ robots as
   \[
	  RMSE_t = \sqrt{\frac{{\sum_{i=1}^{N} \left\Vert [\bar{p}_{i,t}]_i - p_{i,t} \right\Vert}_2}{N}} ,
   \]
   where $[\bar{p}_{i,t}]_j$ is the estimate of $p_{i,t}$ by robot $j$.
   We also consider the root-mean-trace-error (RMTE) to capture the uncertainty evaluated in the algorithm, defined as:
   \[
	  RMTE_{t} = \sqrt{\frac{\sum_{i=1}^N \tr([\Phi_{i,t}]_i)}{N}},
   \]
   where $[\Phi_{i,t}]_j$ denotes the sub-covariance matrix of robot $j$ that relates to the position estimate of robot $i$ at time $t$. We plot the result in Fig. \ref{sim:boundedness}. In particular, for the GS-CI, we plot both the RMSE and the RMTE of robot 1 to discuss the boundedness analysis in Sec.~\ref{sec:analysis}.

\begin{table}
   \caption{Time-Averaged RMSE of UTIAS Datasets [m]}
   \label{table:rmse}
   \centering
    \begin{tabular}{ c  c c c c c } 
  sub-dataset  & LS-Cen & LS-CI & LS-SCI & LS-BDA & GS-CI \\ 
 \hline\hline
  1 & 1.28 & 1.67 & 1.12 & 1.31 &  1.42 \\ 
  2 & 0.74 & 1.41 & 1.75 & 0.80 &  0.79 \\
  3 & 0.23 & 0.96 & 1.23 & 0.26 &  0.29 \\
  4 & 0.21 & 1.21 & 1.49 & 0.23 &  0.28 \\
  5 & 1.72 & 5.45 & 5.20 & 1.79 &  2.17 \\
  6 & 0.79 & 2.08 & 2.07 & 0.82 &  0.85 \\
  7 & 0.59 & 1.49 & 1.73 & 0.86 &  0.82 \\
  8 & 0.71 & 0.96 & 2.00 & 0.84 &  0.80 \\
  9 & 0.26 & 0.28 & 0.65 & 0.27 &  0.31
   \end{tabular}
\end{table}

   % discussion: performance
   Based on the RMSE in Fig. \ref{sim:boundedness}, the LS-BDA and the proposed GS-CI show desirable results since their RMSEs remain relatively constant. However, the LS-BDA does not guarantee the estimation consistency, and achieve this performance with the fully-connected communication graph. Other CI-based methods, including LS-CI and LS-SCI, have increasing localization error over time, due to the overly conservative estimation as discussed in Sec. \ref{sec:related_work}.
   
   % discussion:
   Even though the proposed GS-CI shows desirable result, the required communication graph specified in Fig.~\ref{sim:topology} is far sparser in the GS-CI than those of the LS algorithms. Especially, as the graph $\mathcal{G}_1$ is weakly connected, Proposition 1 assures that the upper bound $\Psi_{1,t}$ is bounded, which leads to the boundedness of $\Phi_{1,t}$. In fact, besides the observation of the landmark, the rest of the information of robot $1$ comes from the single communication from robot $3$. This simulation thus shows how the observation and the communication are treated as complementary information sources in the proposed algorithm. In addition to the sparseness of the communication graph, the proposed GS-CI has the estimates of the entire robot team by design, which facilitates the cooperative planning within the multirobot system.

\begin{figure}[t]
    \centering
    \includegraphics[width=0.45\textwidth]{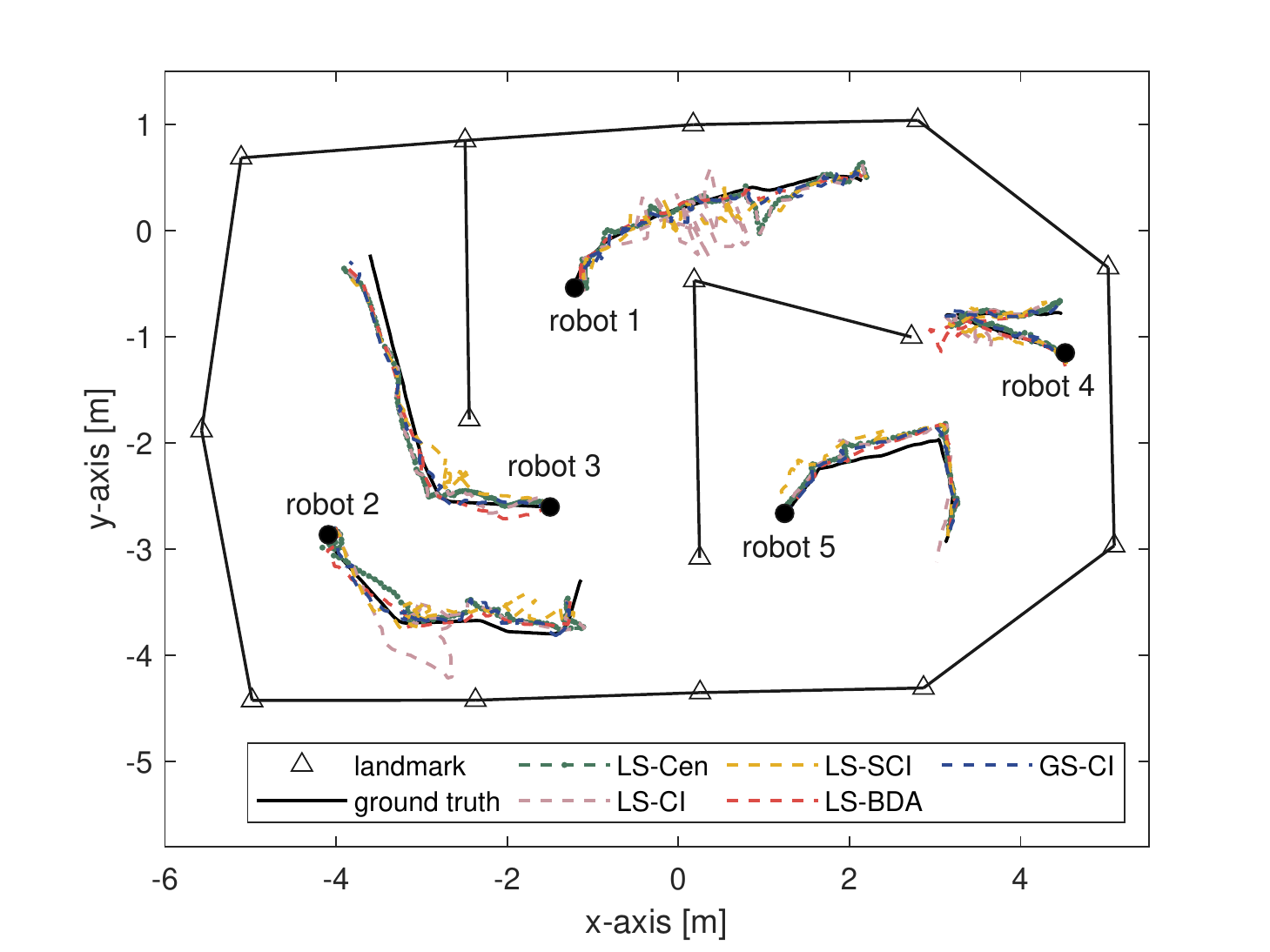}
	\caption{The trajectories of all $5$ robots with different localization algorithms in sub-dataset 9 between $1500$ and $1525$ sec. The communication is assumed to be available whenever needed. The proposed GS-CI is comparable to the LS-Cen, whose result is regarded as the best achievable performance.}
	\label{utias:trajectory_all}
\end{figure}      

\begin{figure}[t]
    \centering
    \includegraphics[width=0.45\textwidth]{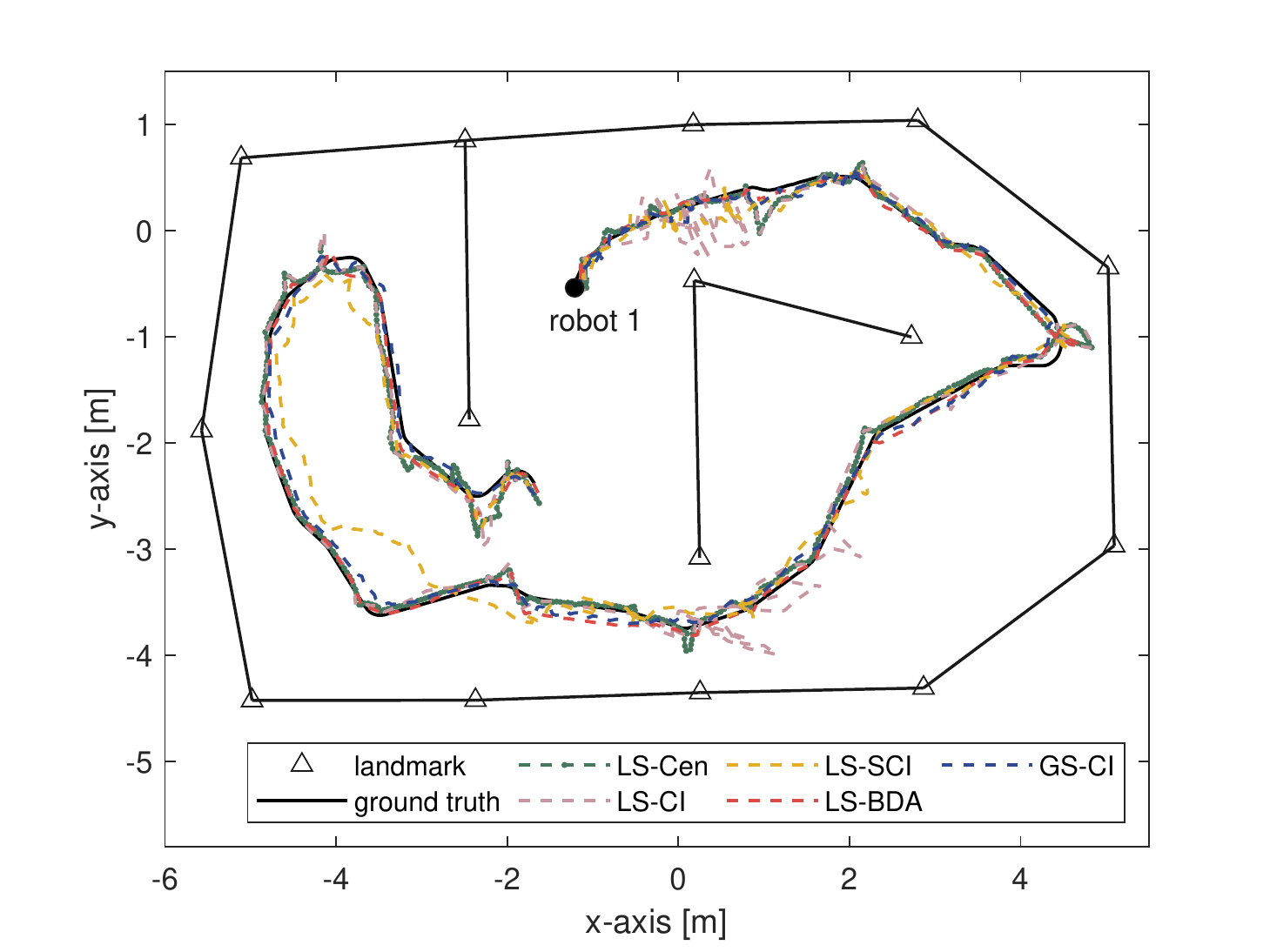}
	\caption{The trajectories of robot 1 with different localization algorithms in sub-dataset 9 between $1500$ and $1650$ sec. The communication is assumed to be available whenever needed. The proposed GS-CI is comparable to the LS-Cen, whose result is regarded as the best achievable performance.}
	\label{utias:trajectory_one}
\end{figure}

   % motivation
   In the previous simulation, the underlying communication graphs for LS algorithm are different from that of the GS algorithm. Since communication is required in the observation update for LS algorithms, the communication graph also affects the observation update. We now investigate the communication link requirement for all $5$ algorithms.
   In this setting, we randomly generate an observation and a communication graphs, and simulate all $5$ algorithms on the generated graphs. Each graph is generated by assigning a directional link between two nodes with a constant probability, or the link density. The observation updates of the LS algorithms are successful only if the underlying communication graph exists. While the landmark observation update depends on the exact system implementation, we assume that it is unaffected by the communication graph, and mainly focus on the relative observation update. In particular, the LS-Cen requires all-to-all communications after the observation update, and the LS-BDA needs a bidirectional communication link between the observation pair. As for the LS-CI and LS-SCI, only uni-directional communication link is sufficient to complete the relative observation.

   % result and discussion
   We simulate all $5$ algorithms with various communication link densities, and plot the averaged localization performance at $t=1000$ in Fig. \ref{sim:topo} over $100$ graphs. Since the LS-Cen requires the all-to-all communication graph, the estimation error of LS-Cen only significantly drop when the communication link density exceeds $0.9$. In other words, the success of the LS-Cen depends on a very dense communication graph. The problematic fusion scheme of LS-SCI becomes obvious when the communication graph becomes dense. Meanwhile, the localization performance of the LS-CI stays satisfying as the estimation consistency is maintained. The LS-BDA has the least estimation error in the simulation. The approximation of the LS-BDA to the LS-Cen becomes more accurate when the communication graph is dense.
    Overall, the proposed GS-CI can provide good localization performance while ensuring the estimation consistency, even when the underlying communication graph is sparse.

%%%
\subsection{Communication Resilience Experiment on the UTIAS Dataset}

   % dataset selction
   To demonstrate the resilience to communication failures of our algorithm, we use the University of Toronto Institute for Aerospace Studies (UTIAS) Multi-Robot Cooperative Localization and Mapping dataset \cite{leung2011utias}. This dataset is a cohesive collection of odometry and observation data from $N=5$ robots, together with accurate ground truth data of robot and landmark positions. This dataset is also widely used across several works as a common benchmark dataset.

   % table
   We first test those $5$ algorithms on the entire $9$ sub-datasets with all communication available on the first $500$ sec.\footnote{The code of this subsection is available at https://git.uclalemur.com/kjchen/tro2020/tree/master/v3.} Each algorithm estimates both the orientation and the position, and we mainly consider the position estimation here. We record the time-averaged $RMSE_t$ in Table \ref{table:rmse} for all $9$ sub-datasets. As expected, the LS-Cen algorithm has the lowest localization error in the entire $9$ sub-datasets. Overall, the proposed GS-CI has comparable localization performance compared to the LS-Cen, which is consistent with the previous simulation.

\begin{figure}[t]
    \centering
    \includegraphics[width=0.45\textwidth]{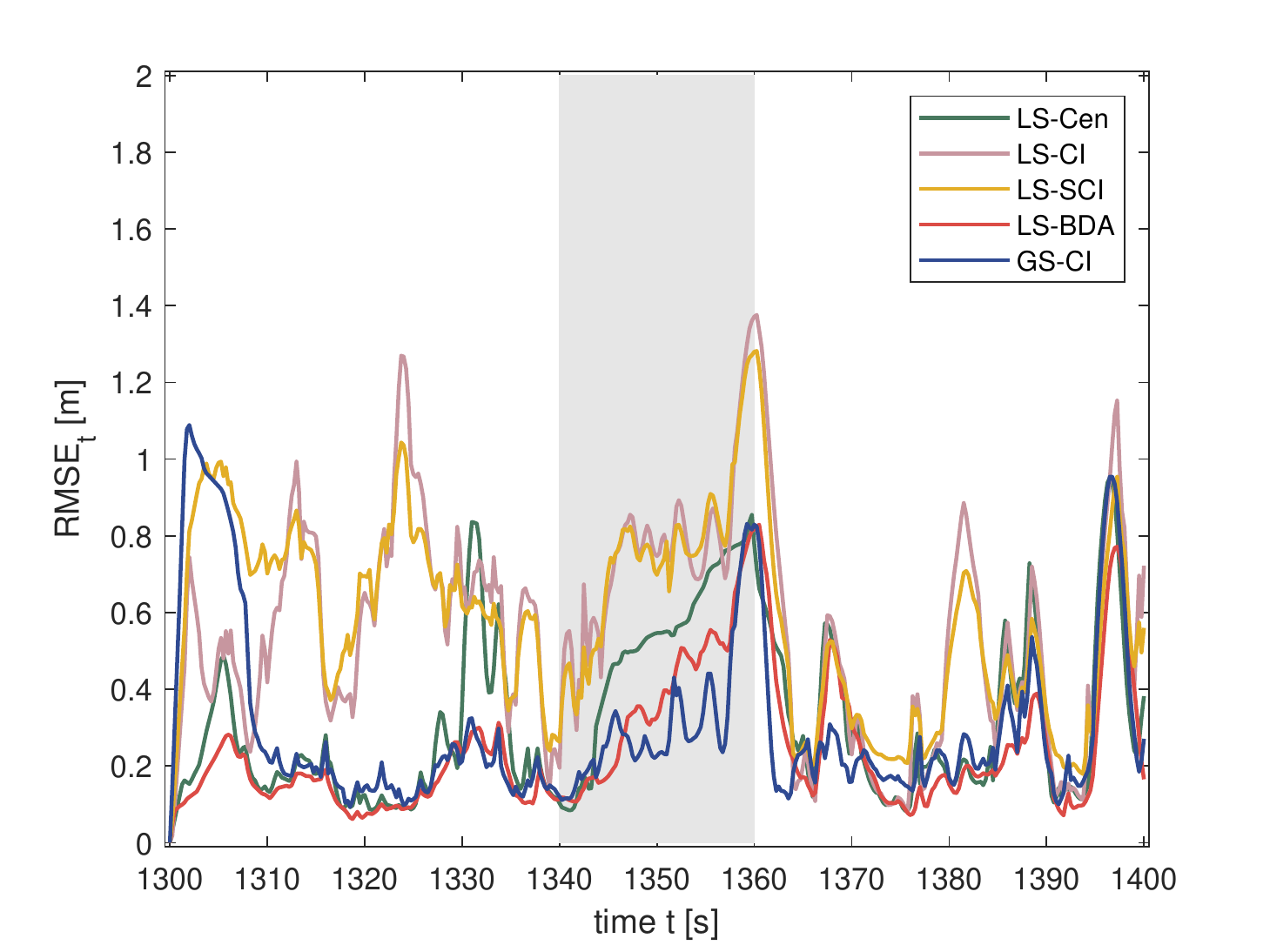}
	\caption{The RMSE with blocked communication from $1340$ to $1360$ sec of sub-dataset 9. The communication remains available besides the window between $1340$ to $1360$ sec. The proposed GS-CI shows resilience during this $20$ sec time window by separating the communication update and the observation update.}
	\label{utias:comm_loss}
\end{figure}

\begin{figure}[t]
    \centering
    \subfloat[\label{1a} $\rho=0.1$]{%
        \includegraphics[width=0.45\textwidth]{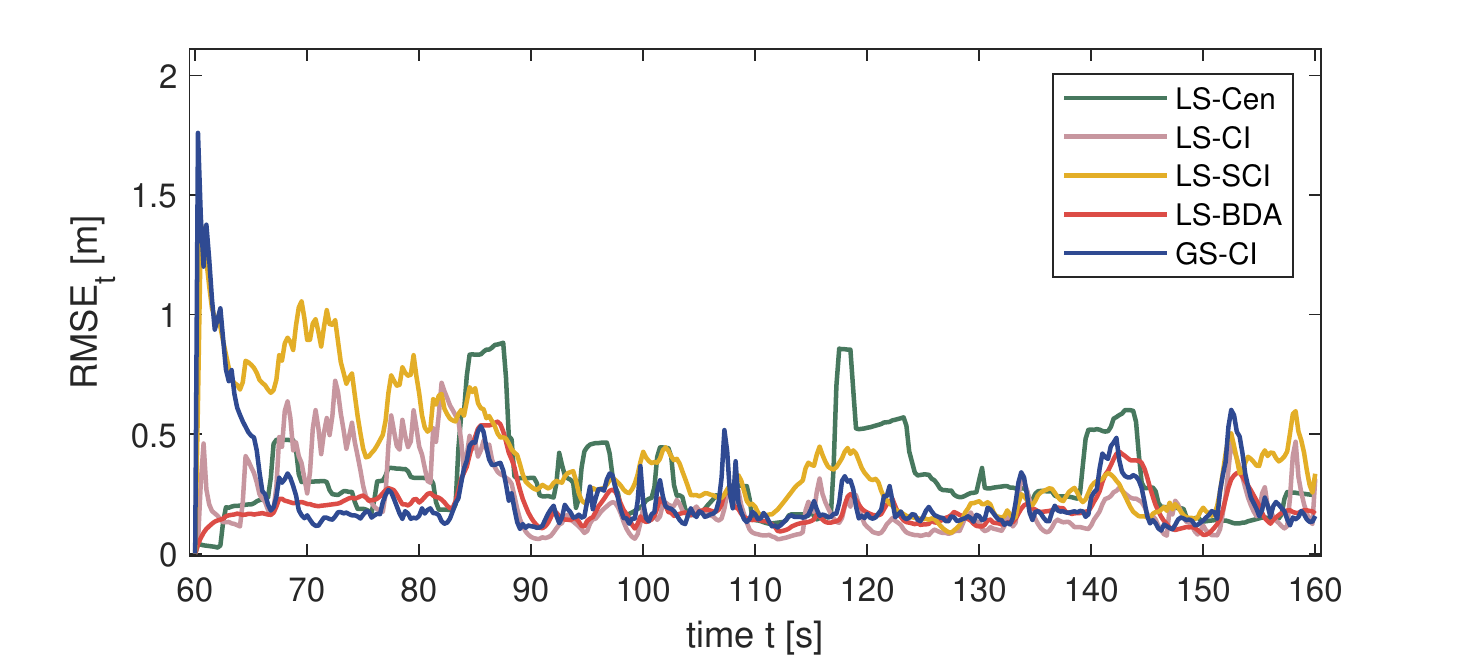}}
    \\
  \subfloat[\label{1c} $\rho=0.9$]{%
        \includegraphics[width=0.45\textwidth]{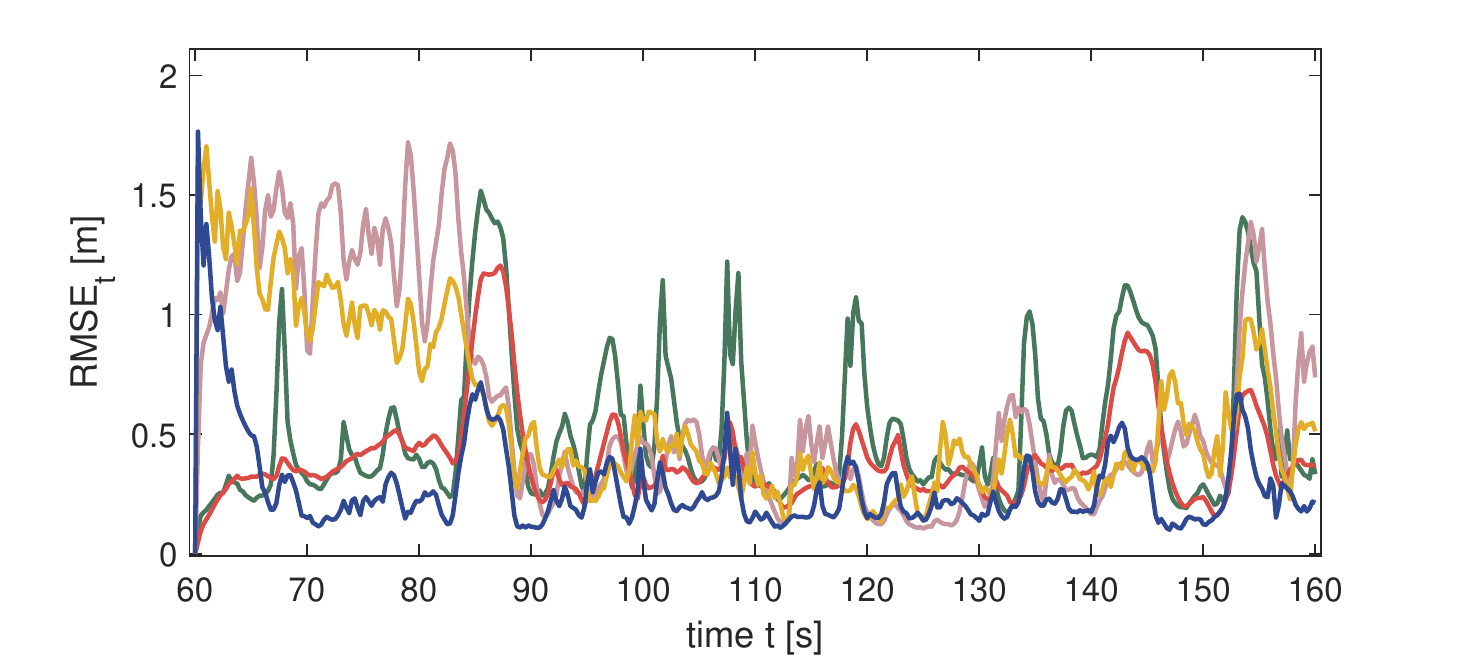}}
  \caption{The RMSE of a $100$ sec snapshot in sub-dataset 9 with two different communication link failure probabilities $\rho$. As there are more communication failures, the estimation error is larger with $\rho=0.9$ that that with $\rho=0.1$ for all algorithms. However, algorithms are affected differently. For instance, between $140$ and $150$ sec, the proposed GS-CI does not have a significant increase in the estimation error, and thus shows its resilience.}
  \label{utias:rmse_dynamics} 
\end{figure}

   % sub-dataset selction, Fig 3 and 4
   Among all $9$ sub-datasets in the UTIAS dataset, sub-dataset 9 is the only one that contains barriers, thus creating a more challenging scenario with its occasional occlusions in observations. We therefore select sub-dataset 9 to demonstrate the communication resilience in the following. To visualize this sub-dataset as well as the localization algorithms, we plot the estimated trajectories of all $5$ robots in Fig.~\ref{utias:trajectory_all} for a $25$ sec window. We also extend the time window of robot $1$ for an additional $125$ sec to show a longer trajectory in Fig.~\ref{utias:trajectory_one}. Both figures shows that the proposed GS-CI is comparable to the LS-Cen, whose result is regarded as the best achievable performance in the ideal scenario.

   % communication entirely blocked
   To investigate the communication resilience of each algorithm, we consider the scenario where the communication is blocked from an adverse source, and study the localization performance dynamics during this period. While different time windows show similar trends, we plot the time window between $1300$ and $1400$ sec of sub-dataset 9 in Fig.~\ref{utias:comm_loss} as an example. During the entire $100$ sec time window, the communication is entirely blocked from $1340$ to $1360$ sec, while the communication remains available for the rest of the time. In this $20$ sec window, which is marked as shaded area in Fig.~\ref{utias:comm_loss}, the estimation errors of all cooperative localization algorithms increase, but the proposed GS-CI has the lowest slope. In other words, by separating the communication update and the observation update, our algorithm is less susceptible from the communication unavailability but continues integrating information from the observation updates. For LS algorithms, since communication is essential to complete the some observation updates, the localization performances are largely impaired in this $20$ sec window.

\begin{figure}[t]
    \centering
    \includegraphics[width=0.45\textwidth]{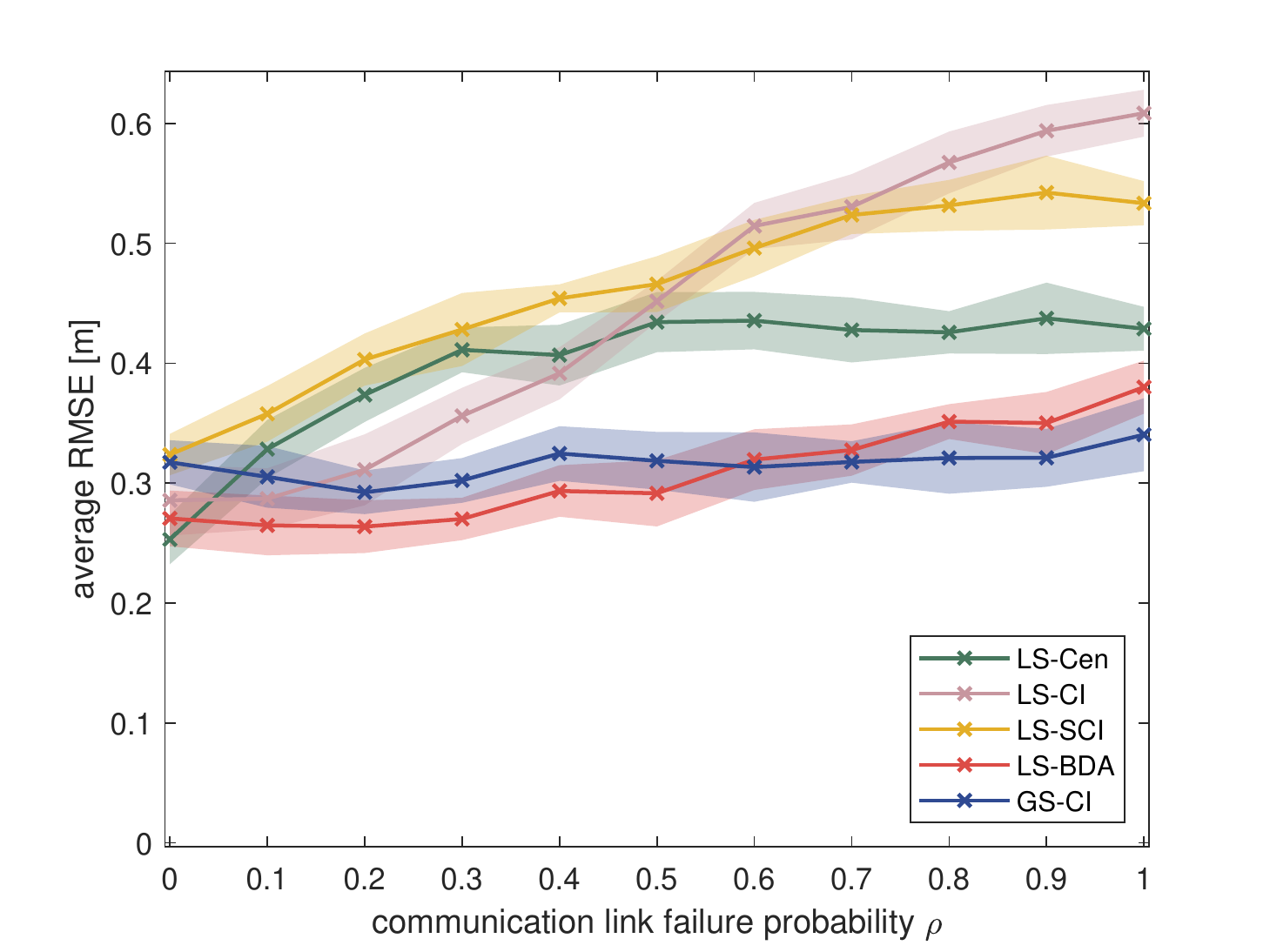}
	\caption{The time-averaged RMSE with varying communication link failure probabilities $\rho$ of the first $500$ sec of sub-dataset 9. We analyze every $50$ sec and plot the 3 standard deviation error bar for all $10$ windows. The proposed GS-CI is only slightly affected by the increase of $\rho$, and it shows resilience across different $\rho$ values, especially in unfavorable communication conditions.}
	\label{utias:rmse_rho}
\end{figure}

   % varying communication link failure probability
   We furthermore generalize the previous experiment and consider the effect of the communication link failure probability on those cooperative localization algorithms. In particular, we consider the scenario in which all the communication links between two robots exist, but suffer from failures with a constant probability $\rho$. For instance, the number of communications after the relative observation of LS-BDA is $2$. Therefore, with probability $(1-\rho)^2$, the relative observation update of LS-BDA can be completed successfully without communication failure.

   To emphasize the effects on estimation dynamics, we plot the $100$ sec snapshots with $\rho=0.1$ and $\rho=0.9$ in Fig.~\ref{utias:rmse_dynamics}. The former case with $\rho=0.1$ is close to the ideal case where all the communication is assumed perfect, while the later case with $\rho=0.9$ is similar to the $20$ sec window with blocked communication in Fig.~\ref{utias:comm_loss}. By comparing the two snapshots, the effect of the communication link failure probability $\rho$ on the cooperative localization algorithms becomes noticeable. For instance, between $140$ and $150$ sec, all the estimation errors increase with $\rho=0.9$ due to communication failures, but the resilience of each algorithm differs. Among all LS algorithms, the LS-BDA shows its estimation accuracy when $\rho=0.1$. However, while the LS-BDA has comparable performance to our GS-CI with $\rho=0.1$, it has overall worse localization performance with $\rho=0.9$. Such comparison substantiates the resilience of our GS-CI under the communication failure.

   To characterize the resilience performance under various scenarios, we plot the time-averaged RMSE against the communication link failure probability $\rho$ on the first $500$ sec of sub-dataset 9 in Fig. \ref{utias:rmse_rho}. In general, the increase of the communication link failure probability $\rho$ has negative impact on all algorithms, as the information coming from the communication becomes less available. However, as the communication failure probability $\rho$ increases, the LS-Cen and the LS-BDA algorithms suffer from higher localization error, even though they show superb localization performance in the ideal cases. On the contrary, the proposed GS-CI maintains a relatively flat curve as the communication failure probability $\rho$ increases. As the proposed GS-CI is only slightly affected by the increase of $\rho$, it shows resilience across different $\rho$ values, especially in unfavorable communication conditions.

\section{Conclusion}

   We present a multirobot cooperative localization algorithm that has an explicit communication update and preserves estimation consistency. By separating the communication and observation steps, the proposed algorithm naturally has better resilience to communication failures, which is inevitable in real-world scenarios. At the same time, the estimation consistency is guaranteed by the covariance intersection. We also characterize the boundedness criterion to demonstrate that communication and observation complementarily provide information in the proposed algorithm.

   The explicit communication in the proposed algorithm not only enhances its resilience to communication failures, but it also induces more design flexibility. For example, advanced scheduling of communication and observation becomes possible to further improve the localization performance as well as reduce overall operation cost. Our initial investigation is summarized in \cite{tsangkai_2}, and a thorough investigation will be completed in the future study. 
   
   % multi hop   
   In a multirobot system, the spatial states of other robots are often required for high-level goals, for example coverage control and cooperative path planning. For algorithms tracking only local states, additional communication has to be performed to acquire such information. As the proposed algorithm already tracks the state of the entire robot team, it actually provides a seamless integration for these tasks. Therefore, we are looking forward to applying our algorithm on cooperative multirobot systems to applications beyond localization.

   % As multirobot systems become sophisticated and mature, more cooperation within the robots is essential. However, the more cooperative a system is, the more vulnerable the system becomes. This work shows the way of cooperation while respecting the distributed nature of the multirobot system to ensure resilience to inter-robot communication failures. With this work targeting at the localization problem, we are interested in more advanced designs and implementations in multirobot systems that take both cooperation and resilience into consideration.

% \section*{Acknowledgment}
%   The authors would like to thank the reviewers for their constructive suggestions that greatly improve the quality of the manuscript.

\bibliographystyle{IEEEtran}
\footnotesize \bibliography{ref}

% Can be used to pull up biographies so that the bottom of the last one
% is flush with the other column.
% \enlargethispage{-5in}

\end{document}